\documentclass[titlepage,12pt]{article} 
\usepackage{amsmath}
\usepackage{amssymb}
\usepackage{amsthm}
\usepackage{setspace}
\usepackage[nottoc]{tocbibind}
\usepackage{graphicx}
\usepackage[all]{xy}
\usepackage{enumerate}
\usepackage[margin=3.2cm]{geometry}
\newcommand{\unit}{\left[0,1\right]}

\begin{document}
\newtheorem{them}{Theorem}[section]
\newtheorem{cor}[them]{Corollary}
\newtheorem{defin}[them]{Definition}
\newtheorem{prop}[them]{Proposition}
\newtheorem{example}[them]{Example}
\newtheorem{lemma}[them]{Lemma}
\newtheorem{conj}[them]{Conjecture}
\newtheorem{corollary}[them]{Corollary}

\begin{titlepage}
\null\vfill
\begin{center}
\LARGE
Bounding the Fat Shattering Dimension of a Composition Function Class Built Using a Continuous Logic Connective
\par
\vskip3cm
\normalsize
By\par
Hubert Haoyang Duan
\par
\vskip1cm
Supervised By
\par
Dr. Vladimir Pestov
\par
\vskip3cm
Project submitted in partial fulfillment of the requirements for the degree of B.Sc. Honours Specialization in Mathematics
\par
\vskip3cm
Department of Mathematics and Statistics\par
Faculty of Science\par
University of Ottawa
\par
\vskip2cm
\copyright Hubert Haoyang Duan, Ottawa, Canada, 2011
\end{center}
\vfill
\end{titlepage}
\normalsize
\singlespacing
\begin{abstract}
\normalsize
We begin this report by describing the Probably Approximately Correct (PAC) model for
learning a concept class, consisting of subsets of a domain, and a
function class, consisting of functions from the domain to the unit
interval. Two combinatorial parameters, the Vapnik-Chervonenkis (VC)
dimension and its generalization, the Fat Shattering dimension of scale $\epsilon$,
are explained and a few examples of their calculations are given with proofs. We then explain Sauer's
Lemma, which involves the VC dimension and is used to prove the equivalence of a
concept class being distribution-free PAC learnable and it having finite
VC dimension.

As the main new result of our research, we explore the construction of a
new function class, obtained by forming compositions with a continuous logic connective, a
uniformly continuous function from the unit hypercube to the unit
interval, from a collection of function classes. Vidyasagar had proved that such a composition function class has finite Fat Shattering dimension of all scales if the classes in the original collection do; however, no estimates of the dimension were known. Using results by Mendelson-Vershynin and Talagrand, we bound the Fat Shattering dimension of scale $\epsilon$ of this new function class in terms of the
Fat Shattering dimensions of the collection's classes.

We conclude this report by providing a few open questions and future research topics involving the PAC learning model.
\end{abstract}
\newpage

\setcounter{tocdepth}{2}
\tableofcontents

\newpage
\normalsize
\section{Introduction}
In the area of statistical learning theory, the Probably Approximately Correct (PAC) learning model formalizes the notion of learning by using sample data points to produce valid hypotheses through algorithms. For instance, the following illustrates one learning problem which can be formalized in the PAC model. Given that there is a disease which affects certain people and out of 100 people in a hospital, 12 of them are sick with this disease. Is there a way to predict whether any given person in the hospital has the disease or not?

This report covers the PAC learning model applied to learning a collection of subsets $\mathcal{C}$, called a concept class, of a domain $X$ and more generally, a collection of functions $\mathcal{F}$, called a function class, from $X$ to the unit interval $\unit$. The report involves mostly concepts from analysis and some concepts from probability theory, but only the completion of the first two years of undergraduate studies in mathematics are assumed from the readers.

\subsubsection*{Report outline}

First, we give two definitions of PAC learning, one for a concept class $\mathcal{C}$ and the other for a function class $\mathcal{F}$, and explore two combinatorial parameters, the Vapnik-Chervonenkis (VC) dimension and the Fat Shattering dimension of scale $\epsilon$, for $\mathcal{C}$ and $\mathcal{F}$, respectively. 
Then, we explain Sauer's Lemma, a theorem which involves the VC dimension of $\mathcal{C}$ and is used to prove that the finiteness of this dimension is a sufficient condition for $\mathcal{C}$ to be learnable.

Finally, as the main new result of our research, given function classes $\mathcal{F}_1,\ldots,\mathcal{F}_k$ and a ``continuous logic connective" (that is, a continuous function $u:\unit^k\to\unit$), we consider the construction of a new composition function class $u(\mathcal{F}_1,\ldots,\mathcal{F}_k)$, consisting of functions $u(f_1,\ldots,f_k)$ defined by $u(f_1,\ldots,f_k)(x) = u(f_1(x),\ldots,f_k(x))$ for $f_i\in\mathcal{F}_i$. We then bound the Fat Shattering dimension of scale $\epsilon$ of this class in terms of a sum of the Fat Shattering dimensions of scale $\delta(\epsilon,k)$ of $\mathcal{F}_1,\ldots,\mathcal{F}_k$, where $\delta(\epsilon,k)$ only depends on $\epsilon$ and $k$. There is a previously known analogous estimate for a composition of concept classes built using a usual connective of classical logic \cite{vid}. We deduce our new bound using results from Mendelson-Vershynin and Talagrand.

Before jumping into the PAC learning model, we provide some basic terminology and results from analysis and measure theory. From now on, any propositions or examples given with proofs, unless mentioned otherwise, are done by us and are independent of any sources.
\newpage
\section{Brief Overview of Analysis and Measure Theory}

This section lists some definitions and results in measure theory and analysis, found in standard textbooks, such as \cite{measuretheory}, \cite{vid}, and \cite{french}, which are used in this report. 

\subsection*{Probability space}

\begin{defin}
Let $X$ be a set. A {\em $\sigma$-algebra} $\mathcal{S}$ is a non-empty collection of subsets of $X$ such that the following are satisfied:
\begin{enumerate}
\item If $A\in \mathcal{S}$, then $X\setminus A\in \mathcal{S}$
\item If $A_i\in \mathcal{S}$ for $i\in\mathbb{N}$, then $\displaystyle\bigcup_{i\in\mathbb{N}} A_i\in \mathcal{S}$
\end{enumerate}
If $\mathcal{S}$ is a $\sigma$-algebra, then the pair $(X,\mathcal{S})$ is called a {\em measurable space}.
\end{defin}

\begin{defin}
Suppose $(X,\mathcal{S})$ and $(Y,\mathcal{T})$ are two measurable spaces. A function $f:X\to Y$ is called {\em measurable} if $f^{-1}(T)\in \mathcal{S}$ for all $T\in \mathcal{T}$.
\end{defin}
\begin{defin}
Given a measurable space $(X,\mathcal{S})$, a function $\mu:\mathcal{S}\to\mathbb{R}^+=\{r\in\mathbb{R}:r\geq 0\}$ is a {\em measure} if the following hold:
\begin{enumerate}
\item $\mu(\emptyset)=0$
\item If $A_i\in\mathcal{S}$ for all $i\in\mathbb{N}$ and $A_i\cap A_j=\emptyset$ whenever $i\neq j$, then
\[
\mu \left(\bigcup_{i\in\mathbb{N}} A_i\right)=\sum_{i\in\mathbb{N}} \mu(A_i)
\]
\end{enumerate}
The triple $(X,\mathcal{S},\mu)$ is called a {\em measure space}. If in addition, $\mu$ satisfies $\mu(X) = 1$, then $\mu$ is a {\em probability measure} and $(X,\mathcal{S},\mu)$ is called a {\em probability space}. 
\end{defin}

Given a probability space $(X,\mathcal{S},\mu)$, one can measure the difference between two subsets $A,B\in\mathcal{S}$ of $X$ by looking at their symmetric difference $A\bigtriangleup B$, which is indeed in $\mathcal{S}$:
\begin{align*}
\mu(A \bigtriangleup B) &= \mu( (A\cup B) \setminus (A\cap B))\\
& = \mu ( ((X\setminus A) \cap B)\cup (A\cap (X\setminus B))).
\end{align*}
More generally, given two measurable functions $f,g:X\to\unit$, one can look at the expected value of their absolute difference by integrating with respect to $\mu$:
\[
\int_X |f(x) - g(x)|\, d\mu(x).
\]
This report does not go into any details involving the Lebesgue integral but does assume that integration of measurable functions to the real numbers, which is a measure space, makes sense and is linear and order-preserving:
\[
\int_X (rf(x) + r'g(x)) \, d\mu(x) = r\int_X f(x)\,d\mu(x) + r'\int_X g(x)\,d\mu(x)
\]
and
\[
\int_X f(x) \,d\mu(x) \leq \int_X g(x)\,d\mu(x),
\]
if $f(x)\leq g(x)$ for all $x\in X$. 

Validating hypotheses in the PAC learning model uses the idea of measuring the symmetric difference of two subsets of a probability space $(X,\mathcal{S},\mu)$ and calculating the expected value of the difference of $f,g:X\to\unit$. The structure of metric spaces arises naturally from these two notions.

\subsection*{Metric spaces}

\begin{defin}
Let $M$ be a nonempty set. A function $d:M\times M\to \mathbb{R}^+$ is a {\em metric} if the following hold for all $m_1,m_2,m_3\in M$:
\begin{enumerate}
\item $d(m_1,m_2)=0$ if and only if $m_1=m_2$
\item $d(m_1,m_2)=d(m_2,m_1)$
\item $d(m_1,m_2)\leq d(m_1,m_3)+d(m_3,m_2)$
\end{enumerate}
In this case, the pair $(M,d)$ is called a {\em metric space}.
\end{defin}

\begin{defin}
Given a metric space $(M,d)$, a {\em metric sub-space} of $M$ (which is a metric space in its own right) is a nonempty subset $M'\subseteq M$ equipped with the distance $d_{|_{M'}}$, the restriction of $d$ to $M'$.
\end{defin}





The structure of a metric space exists in every vector space equipped with a norm.


\begin{defin}\label{vectorspacenorm}
Suppose $V$ is a vector space over $\mathbb{R}$. A function $\rho: V \to \mathbb{R}^+$ is a {\em norm} on $V$ if for all $v_1,v_2\in V$ and for all $r\in \mathbb{R}$,
\begin{enumerate}
\item $\rho(rv_1 ) = |r| \rho (v_1)$
\item $\rho( v_1 + v_2) \leq \rho( v_1) + \rho( v_2)$
\item $\rho(v_1) = 0$ if and only if $v_1 = 0$
\end{enumerate}
If $\rho$ is a norm on $V$, then $(V,\rho)$ is called a {\em normed vector space}.
\end{defin}

\begin{prop}
Based on Definition \ref{vectorspacenorm}, the function $d:V\times V\to\mathbb{R}^+$ defined by $d(u,v) = \rho(u - v)$ is a metric on $V$, and $d$ is called the {\em metric induced by the norm} $\rho$ on $V$.
\end{prop}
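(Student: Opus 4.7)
The plan is to verify the three defining axioms of a metric directly from the three defining axioms of a norm, using the algebraic identity $u - v = -(v-u)$ and the decomposition $u - v = (u-w) + (w-v)$. Since $\rho$ takes values in $\mathbb{R}^+$ by Definition \ref{vectorspacenorm}, the same is immediately true of $d$, so the codomain is correct without extra work.

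First I would establish the identity of indiscernibles: by the definition $d(u,v) = \rho(u-v)$, together with norm property 3, $d(u,v) = 0$ if and only if $u - v = 0$, which in the vector space $V$ is equivalent to $u = v$. Next I would handle symmetry by writing $u - v = (-1)(v - u)$ and applying norm property 1 with $r = -1$, giving $\rho(u-v) = |-1|\,\rho(v-u) = \rho(v-u)$, so $d(u,v) = d(v,u)$.

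Finally, for the triangle inequality, I would introduce the third point $w$ by the additive decomposition $u - v = (u - w) + (w - v)$ and apply the subadditivity of the norm (property 2) to obtain
\[
d(u,v) = \rho\bigl((u-w) + (w-v)\bigr) \leq \rho(u-w) + \rho(w-v) = d(u,w) + d(w,v).
\]
Each of the three steps is a one-line consequence of the corresponding norm axiom, so there is no genuine obstacle; the only thing that requires a touch of care is invoking property 1 with a negative scalar in the symmetry step, which is why it is stated for all $r \in \mathbb{R}$ rather than only for $r \geq 0$.
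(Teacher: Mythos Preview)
Your proof is correct and complete: each metric axiom is obtained as a direct consequence of the corresponding norm axiom, exactly as one expects. The paper itself states this proposition without proof, treating it as a standard fact, so there is nothing to compare against; you have simply supplied the routine verification that the paper omits.
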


The following subsection provides a few examples of metric spaces which will be encountered in this report.

\subsubsection*{Examples of metric spaces}\label{appendixmetric}

The real numbers $(\mathbb{R},\rho)$, with the absolute value norm $\rho(r) = |r|$ for $r\in\mathbb{R}$, is a normed vector space so $\mathbb{R}$ can be equipped with a metric structure.
\begin{example}\label{examplerealsmetric}
The set $\mathbb{R}$ with distance $d$ defined by $d(r_1,r_2)= |r_1-r_2|$ for $r_1,r_2\in\mathbb{R}$ is a metric space.
\end{example}

The unit interval $\unit$ is a subset of $\mathbb{R}$, so it is a metric sub-space of $(\mathbb{R},d)$, and this space will be used quite often in this report.

Given a probability space $(X,\mathcal{S},\mu)$, the set $V$ of all bounded measurable functions from $X$ to $\mathbb{R}$ is a vector space, with point-wise addition and scalar multiplication. The function $\rho: V \to \mathbb{R}^ + $ defined by
\[
\rho (f) = \sqrt{\left(\int_X (f(x))^2 d\mu(x)\right)}
\]
is a norm on $V$ if any two functions $f,g: X \to \mathbb{R}$ which agree on a subset of $X$ with full measure, $\mu(\{x\in X: f(x)= g(x)\})= 1$, are identified.\footnote{This identification can be done using an equivalence relation, so this report will not go into any details here.} The norm $\rho$ is called the {\em $L_2(\mu)$ norm} on $V$ and we normally write $||f||_2 = \rho(f)$ for $f\in V$. As a result, $V$ can be turned into a metric space.

\begin{example}\label{l2normexample}

Following the notations in the paragraph above, $V$ is a metric space with distance $d$ defined by
\[
d(f,g) = ||f-g||_2 = \sqrt{\left(\int_X (f(x)-g(x))^2 d\mu(x)\right)}.
\]
\end{example}

Write $\unit^X$ for the set of all measurable functions from a probability space $(X,\mathcal{S},\mu)$ to $\unit$. Then, it is a metric sub-space of $V$ with distance induced by the $L_2(\mu)$ norm on $V$, restricted of course to $\unit^X$.

Given metric spaces $(M_1,d_1),\ldots, (M_k,d_k)$, their product $M_1\times\ldots\times M_k$ always has a metric structure.

\begin{example}\label{productmetric}
If $(M_1,d_1),\ldots, (M_k,d_k)$ are metric spaces, then their product $M_1\times\ldots\times M_k$ is a metric space with distance $d^2$ defined by
\[
d^2((m_1,\ldots,m_k),(m'_1,\ldots,m'_k)) = \sqrt{\left((d_1(m_1,m_1'))^2 +\ldots + (d_k(m_k,m_k'))^2\right)}.
\]
The distance $d^2$ is normally referred to as the {\em $L_2$ product distance} on $M_1\times\ldots\times M_k$.
\end{example}

From Examples \ref{examplerealsmetric} and \ref{productmetric}, the set $\unit^k$, which denotes the set-theoretic product $\unit\times\ldots\times\unit$ is then a metric space with distance $d^2$ defined by
\[
d^2((r_1,\ldots,r_k),(r_1',\ldots,r_k')) = \sqrt{\left( |r_1 - r_1'|^2 + \ldots + |r_k - r_k'|^2\right)}.
\]
Also, following Examples \ref{l2normexample} and \ref{productmetric}, if $\mathcal{F}_1,\ldots,\mathcal{F}_k$ are sets of measurable functions from a probability space $(X,\mathcal{S},\mu)$ to the unit interval, then $\mathcal{F}_i\subseteq \unit^X$ for each $i=1,\ldots, k$. Therefore, the product $\mathcal{F}_1\times\ldots\times\mathcal{F}_k$ is a metric space with distance defined by
\[
d^2((f_1,\ldots,f_k),(f'_1,\ldots,f'_k)) = \sqrt{\left((||f_1 - f'_1||_2)^2 + \ldots + (||f_k - f'_k||_2)^2\right)}.
\]


\newpage

\section{The Probably Approximately Correct Learning Model}\label{pacsection}

Let $(X,\mathcal{S})$ be a measurable space. A {\em concept class} $\mathcal{C}$ of $X$ is a subset of $\mathcal{S}$ and an element $A\in\mathcal{C}$ (a measurable subset of $X$) is called a {\em concept}. A {\em function class} $\mathcal{F}$ is a collection of measurable functions from $X$ to the unit interval $\unit$. Unless stated otherwise, from this section onwards, the following notations will be used:
\begin{enumerate}
\item $X = (X,\mathcal{S})$: a {\em measurable space}
\item $\mu$: a {\em probability measure} $\mathcal{S}\to \mathbb{R}^+$
\item $\mathcal{C}$: a {\em concept class} and $\mathcal{F}$: a {\em function class}
\item $\unit^X$: the set of all {\em measurable} functions $f:X\to\unit$, instead of the customary notation of all functions from $X$ to $\unit$.
\end{enumerate}

This section provides the definitions of learning $\mathcal{C}$ and $\mathcal{F}$ in the Probably Approximately Correct (PAC) learning model, introduced in 1984 by Valiant. 

Concept class PAC learning involves producing a valid hypothesis for every concept $A\in\mathcal{C}$ by first drawing random points, forming a training sample, from $X$ labeled with whether these points are contained in $A$. In other words, a labeled sample of $m$ points $x_1,\ldots,x_m\in X$ for $A$ consists of these points and the evaluations $\chi_A(x_1),\ldots,\chi_A(x_m)$ of the indicator function $\chi_A:X\to\{0,1\}$, where
\[
\chi_A(x) = 1 \textnormal{ if and only if }x\in A. 
\]
On the other hand, an unlabeled sample of points does not include these evaluations. The set of all labeled samples of $m$ points can then be identified with $(X\times\{0,1\})^m$, and producing a hypothesis for $A$ with a labeled sample is exactly the process of associating the sample to a concept $H\in \mathcal{C}$ (i.e. this process is a function from the set of all labeled samples to the concept class).

Here is the precise definition of a concept class being learnable.

\begin{defin}[\cite{pacdefinition}]\label{paclearningconcept}
A concept class $\mathcal{C}$ is {\em distribution-free Probably Approximately Correct learnable} if there exists an algorithm\footnote{In this report, a learning algorithm is simply defined to be a function.} $L:\cup_{m\in\mathbb{N}}(X\times\{0,1\})^m\to\mathcal{C}$ with the following property: for every $\epsilon>0$, for every $\delta>0$, there exists a $M\in\mathbb{N}$ such that for every $A\in\mathcal{C}$, for every probability measure $\mu$, for every $m\geq M$, for any $x_1,\ldots,x_m\in X$, we have $\mu(H_m\bigtriangleup A)<\epsilon$ with confidence at least $1-\delta$, where $H_m=L((x_1,\chi_A(x_1)),\ldots,(x_m,\chi_A(x_m)))$.
\end{defin}

Confidence of at least $1-\delta$ in the definition above, keeping to the same notations, simply means that the (product) measure of the set of all $m$-tuples $(x_1,\ldots,x_m)\in X^m$, where $\mu(H_m\bigtriangleup A)<\epsilon$ for $H_m=L((x_1,\chi_A(x_1)),\ldots,(x_m,\chi_A(x_m)))$, is at least $1-\delta$. In other words, an equivalent statement to $\mathcal{C}$ is distribution-free PAC learnable is that for every $\epsilon,\delta>0$, there exists $M\in\mathbb{N}$ such that for every $A\in\mathcal{C}$, probability measure $\mu$, and $m\geq M$,
\[
\mu^m(\{(x_1,\ldots,x_m)\in X^m :\mu(H_m\bigtriangleup A)\geq \epsilon\})\leq \delta,\footnote{The symbol $\mu^m$ denotes the product measure on $X^m$; the reader can refer to \cite{measuretheory} for the details.}
\]
for $H_m=L((x_1,\chi_A(x_1)),\ldots,(x_m,\chi_A(x_m)))$.

A concept class $\mathcal{C}$ is distribution-free learnable in the PAC learning model if a hypothesis $H$ can always be constructed from an algorithm $L$ for every concept $A\in \mathcal{C}$, using any labeled sample for $A$, such that the measure of their symmetric difference $H\bigtriangleup A$ is arbitrarily small with respect to every probability measure and with arbitrarily high confidence, as long as the sample size is large enough.

Every concept $A\in\mathcal{C}$ is a subset of $X$ so $A$ can be associated to its indicator function $\chi_A: X\to\{0,1\}$. Even more generally, $\chi_A$ is a function from $X$  to $\unit$; in other words, every concept class $\mathcal{C}$ can be identified as a function class $\mathcal{F}_\mathcal{C}=\{\chi_A:X\to\unit:A\in\mathcal{C}\}$, so it is natural to generalize Definition \ref{paclearningconcept} for any function class $\mathcal{F}$.

Definition \ref{paclearningconcept} involves the symmetric difference of two concepts and its generalization to measurable functions $f,g:X\to\unit$ is the expected value of their absolute difference $E_\mu(f,g)$, as seen in the previous section:
\[
E_\mu(f,g) = \int_X |f(x) - g(x)|\, d\mu(x).
\]
A simple exercise can show that if $f,g\in\unit^X$ take values in $\{0,1\}$, so they are indicator functions of two concepts $A,B\subseteq X$, then $E_\mu(f,g)$ coincide with the measure of their symmetric difference: $E_\mu(f,g)=\mu(A\bigtriangleup B)$, where $f=\chi_A$ and $g=\chi_B$.

With the generalization of the symmetric difference, distribution-free PAC learning for any function class can be defined. In the context of function class learning, a labeled sample of $m$ points $x_1,\ldots,x_m\in X$ for a function $f\in\mathcal{F}$ consists of these points and the evaluations $f(x_1),\ldots,f(x_m)$. Then, the set of all labeled samples of $m$ points can be identified with $(X\times\unit)^m$, and producing a hypothesis is the process of associating a labeled sample to a function $H\in\mathcal{F}$ (just as in concept class learning).

\begin{defin}[\cite{vid}]
A function class $\mathcal{F}$ is {\em distribution-free Probably Approximately Correct learnable} if there exists an algorithm $L:\cup_{m\in\mathbb{N}}(X\times\unit)^m\to\mathcal{F}$ with the following property: for every $\epsilon>0$, for every $\delta>0$, there exists a $M\in\mathbb{N}$ such that for every $f\in\mathcal{F}$, for every probability measure $\mu$, for every $m\geq M$, for any $x_1,\ldots,x_m\in X$, we have $E_\mu(H_m,f)<\epsilon$ with confidence at least $1-\delta$, where $H_m=L((x_1,f(x_1)),\ldots,(x_m,f(x_m)))$.
\end{defin}

Both definitions of PAC learning contain the $\epsilon$ and $\delta$ parameters. The error parameter $\epsilon$ is used because the hypothesis is not required to have zero error - only an arbitrarily small error. The risk parameter $\delta$ exists because there is no guarantee that any collection of sufficiently large training points leads to a valid hypothesis; the learning algorithm is only expected to produce a valid hypothesis with the sample points with confidence at least $1-\delta$. Hence, the name ``Probably ($\delta$) Approximately ($\epsilon$) Correct" is used \cite{kearns}.

The following example illustrates that the set of all axis-aligned rectangles in $\mathbb{R}^2$ is distribution-free PAC learnable. Both the statement and its proof can be found in Chapter 3 of \cite{vid} and Chapter 1 of \cite{kearns}.

\begin{example}
In $X = \mathbb{R}^2$, the concept class $\mathcal{C} = \{\left[a,b\right]\times\left[c,d\right]:a,b,c,d\in\mathbb{R} \}$ is distribution-free PAC learnable.
\end{example}

\begin{proof}
Let $\epsilon,\delta>0$. Given a concept $A$ and any sample of $m$ training points $x_1,\ldots,x_m\in X$, define the hypothesis concept $H_m$ to be the intersection of all rectangles containing only training points $x_i$ such that $\chi_A(x_i) = 1$. In other words, $H_m$ is the smallest rectangle that contains only the sample points {\em in} $A$.

Let $\mu$ be any probability measure, and in fact, $H_m \bigtriangleup A = A \setminus H_m$, which can be broken down into four sections $T_1,\ldots,T_4$. If we can conclude that
\[
\mu\left(\bigcup_{i=1}^4 T_i\right)<\epsilon,
\]
with confidence at least $1-\delta$, then the proof is complete. 

Consider the top section $T_1$ and define ${\tilde T_1}$ to be the rectangle along the top parts of $A$ whose measure is exactly $\epsilon/4$. The event ${\tilde T_1}\subseteq T_1$, which is equivalent to $\mu(T_1)\geq \epsilon/4$, holds exactly when no points in the sample $x_1,\ldots,x_m$ fall in ${\tilde T_1}$, and the probability of this event (which is the measure of all such $m$-tuples of $(x_1,\ldots,x_m)\in X^m$ where $x_i\notin {\tilde T_1}$ for all $i=1,\ldots,m$) is
\[
\left(1-\frac{\epsilon}{4}\right)^m.
\]
Similarly, the same holds for the other three sections $T_2,\ldots,T_4$. Therefore, the probability that there exists at least one $T_i$ such that $\mu(T_i)\geq \epsilon/4$, where $i\in\{1,\ldots,4\}$, is at most
\[
4\left(1-\frac{\epsilon}{4}\right)^m.
\]
Hence, as long as we pick $m$ large enough that $4(1-\epsilon/4)^m\leq \delta$, with confidence (probability) at least $1-\delta$, $\mu(T_i)<\epsilon/4$ for every $i=1,\ldots,4$ and thus,
\[
\mu(H_m \bigtriangleup A) = \mu\left(\bigcup_{i=1}^4T_i\right)\leq \mu(T_1)+\ldots +\mu(T_4)<4\left(\frac{\epsilon}{4}\right) = \epsilon.
\]
Please note that this argument, though very intuitive, actually requires the classical Glivenko-Cantelli theorem.

In summary, as long as $m\geq (4/\epsilon) \ln(4/\delta)$, with confidence at least $1-\delta$, $\mu(H_m\bigtriangleup A)<\epsilon$. We note that this estimate of the sample size only depends on $\epsilon$ and $\delta$, so $\mathcal{C}$ is indeed distribution-free PAC learnable.
\end{proof}

In the next section, a fundamental theorem which characterizes concept class distribution-free PAC learning will be stated, and two more concept classes, one learnable and the other not,\footnote{They are direct results of the theorem.} will be given. However, in order to state this theorem, the notion of shattering, which is essential in learning theory, must be introduced.

\newpage
\section{The Vapnik-Chervonenkis Dimension} The Vapnik-Chervonenkis dimension is a combinatorial parameter which is defined using the notion of shattering, developed first in 1971 by Vapnik and Chervonenkis.

\begin{defin}[\cite{vcdefinition}]
Given any set $X$ and a collection $\mathcal{A}$ of subsets of $X$, the collection $\mathcal{A}$ {\em shatters} a subset $S\subseteq X$ if for every $B\subseteq S$, there exists $A\in\mathcal{A}$ such that
\[
A \cap S = B.
\]
\end{defin}

There is an equivalent condition, which is sometimes easier to work with, to shattering, expressed in terms of characteristic functions of subsets of $X$.

\begin{prop}\label{equivalentshattering}
The collection $\mathcal{A}$ shatters a subset $S=\{x_1,\ldots,x_n\}\subseteq X$ if and only if for every $e=(e_1,\ldots,e_n)\in\{0,1\}^n$, there exists $A\in\mathcal{A}$ such that
\[
\chi_A(x_i) = e_i,
\]
for all $i=1,\ldots, n$.
\end{prop}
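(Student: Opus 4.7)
The plan is to prove both directions by exhibiting the natural bijection between subsets $B \subseteq S$ and binary vectors $e \in \{0,1\}^n$ given by $B \leftrightarrow (\chi_B(x_1), \ldots, \chi_B(x_n))$. Since the definition of shattering quantifies over all $B \subseteq S$ and the statement of the proposition quantifies over all $e \in \{0,1\}^n$, matching these quantifications via this bijection should yield the equivalence essentially for free; the only content is verifying that the condition ``$A \cap S = B$'' translates correctly into the condition ``$\chi_A(x_i) = e_i$ for all $i$''.

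For the forward direction, I would assume $\mathcal{A}$ shatters $S$ and fix an arbitrary $e = (e_1, \ldots, e_n) \in \{0,1\}^n$. I would define $B = \{x_i \in S : e_i = 1\}$, which is a subset of $S$, and apply the shattering hypothesis to obtain some $A \in \mathcal{A}$ with $A \cap S = B$. Then, for each $i = 1, \ldots, n$, I would observe that $\chi_A(x_i) = 1$ if and only if $x_i \in A$; since $x_i \in S$, this happens if and only if $x_i \in A \cap S = B$, which happens if and only if $e_i = 1$. Since $\chi_A(x_i)$ and $e_i$ both lie in $\{0,1\}$, this forces $\chi_A(x_i) = e_i$.

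For the converse, I would assume the condition on characteristic functions and fix an arbitrary $B \subseteq S$. I would define $e_i = \chi_B(x_i) \in \{0,1\}$ for $i = 1, \ldots, n$, apply the hypothesis to this $e$ to obtain $A \in \mathcal{A}$ with $\chi_A(x_i) = e_i$, and then check that $A \cap S = B$ by comparing elements: for each $x_i \in S$, membership in $A \cap S$ is determined by $\chi_A(x_i)$, which by construction agrees with $\chi_B(x_i)$, so $x_i \in A \cap S$ if and only if $x_i \in B$.

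There is no real obstacle here, since the proposition is essentially a restatement of the definition of shattering in the language of indicator functions. The only thing to be careful about is keeping the two quantifications (over subsets of $S$ versus over binary $n$-tuples) properly aligned, and making the trivial but necessary observation that $A \cap S$ is completely determined by the values $\chi_A(x_i)$ for $i = 1, \ldots, n$.
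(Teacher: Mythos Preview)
Your proposal is correct and is precisely the natural unpacking of the definition; the paper's own proof is simply the word ``Trivial.'' There is nothing to add or compare.
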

\begin{proof}
Trivial.
\end{proof}

\begin{defin}[\cite{vcdefinition}]
The {\em Vapnik-Chervonenkis (VC) dimension} of the collection $\mathcal{A}$, denoted by $\mathrm{VC}(\mathcal{A})$, is defined to be the cardinality of the largest finite subset $S\subseteq X$ shattered by $\mathcal{A}$. If $\mathcal{A}$ shatters arbitrarily large finite subsets of $X$, then the VC dimension of $\mathcal{A}$ is defined to be $\infty$.
\end{defin}

The VC dimension is defined for every collection $\mathcal{A}$ of subsets of any set $X$, so in particular, $X=(X,\mathcal{S})$ can be a measurable space and $\mathcal{A} = \mathcal{C}$ can be a concept class. 

The following are a few examples of how to calculate VC dimensions in the context of $X = \mathbb{R}^n$. In order to prove the VC dimension of a concept class $\mathcal{C}$ is $d$, we must provide a subset $S\subseteq X$ with cardinality $d$ which is shattered by $\mathcal{C}$ and prove that no subset with cardinality $d+1$ can be shattered by $\mathcal{C}$.

\begin{example}\label{powerset}
If $X = \mathbb{R}$, then the powerset of $X$ has infinite VC dimension. More generally, for every infinite set $X$, $\mathrm{VC}(\mathcal{P}(X))=\infty$.
\end{example}

\begin{example}
In the space $X =\mathbb{R}$, let $\mathcal{C} = \{\left[a,b\right]: a,b\in \mathbb{R},a<b\}$ be the collection of all closed intervals. Then, $\mathrm{VC}(\mathcal{C}) = 2$. 
\end{example}

\begin{proof}
Consider the subset $S=\{1,2\}\subseteq \mathbb{R}$; $\mathcal{C}$ shatters $S$ because
\[
\left[a,b\right]\cap S = \begin{cases}
\emptyset& \quad\textnormal{ if }a> 2 \textnormal{ or } b<1\\
\{1\}& \quad\textnormal{ if }a\leq 1,b<2\\
\{2\}&\quad\textnormal{ if }a>1,b\geq 2\\
\{1,2\}&\quad\textnormal{ if }a\leq 1,b\geq 2.
\end{cases}
\]

On the other hand, given any subset $S=\{x,y,z\}\subseteq \mathbb{R}$ with three distinct points, and assume the order to be $x<y<z$. Then, there are no closed interval in $\mathcal{C}$ containing $x$ and $z$ but not $y$.
\end{proof}

\begin{example}\label{newvcexample}
Consider the space $X = \mathbb{R}^n$. A hyperplane $H_{\vec{a},b}$ is defined by a nonzero vector $\vec{a}=(a_1,\ldots,a_n)\in \mathbb{R}^n$ and a scalar $b\in\mathbb{R}$:
\begin{align*}
H_{\vec{a},b}&=\{\vec{x}=(x_1,\ldots,x_n)\in\mathbb{R}^n:\vec{x}\cdot\vec{a}=b\}\\
&=\{\vec{x}=(x_1,\ldots,x_n)\in\mathbb{R}^n:x_1a_1+\ldots+x_na_n =b\}.
\end{align*}
Write $\mathcal{C}$ as the set of all hyperplanes: $\mathcal{C}=\{H_{\vec{a},b}:\vec{a}\in\mathbb{R}^n\setminus\{\vec{0}\},b\in\mathbb{R}\}$. Then $\mathrm{VC}(\mathcal{C}) = n$.

\begin{proof}
Consider the subset $S=\{{\vec e}_1,\ldots,{\vec e}_n\}\subseteq \mathbb{R}^n$, where ${\vec e}_i$ is the vector with $1$ on the $i$-th component and $0$ everywhere else. Suppose $B\subseteq S$ and there are two cases to consider:
\begin{enumerate}
\item If $B = \emptyset$, then let ${\vec a} = (1,1,\ldots,1)\in\mathbb{R}^n$ and the hyperplane $H_{\vec{a},-1} = \{\vec{x}=(x_1,\ldots,x_n)\in\mathbb{R}^n:x_1+\ldots+x_n =-1\}$ is disjoint from $S$.
\item If $B \neq \emptyset$, then set $\vec{a }= (a_1,\ldots,a_n)\in\mathbb{R}^n\setminus\{{\vec 0}\}$, where $a_i = \chi_{B}({\vec e}_i)$. Then the hyperplane $H_{\vec{a},1} = \{\vec{x}=(x_1,\ldots,x_n)\in\mathbb{R}^n:x_1a_1+\ldots+x_na_n =1\}$ satisfies
\[
H_{\vec{a},1} \cap S = B.
\]
\end{enumerate}

Moreover, no subset $S=\{{\vec x}_1,\ldots,{\vec x}_n,{\vec x}_{n+1}\}\subseteq \mathbb{R}^n$ with cardinality $n+1$ can be shattered by $\mathcal{C}$. At best, there exists a unique hyperplane $H_{\vec{a},b}$ containing $n$ of these points, say $\{{\vec x}_1,\ldots,{\vec x}_n\}$, so if ${\vec x}_{n+1} \in H_{\vec{a},b}$, then there are no hyperplanes that include ${\vec x}_1,\ldots,{\vec x}_n$, but not ${\vec x}_{n+1}$. Otherwise, if ${\vec x}_{n+1} \notin H_{\vec{a},b}$, then there are no hyperplanes that include ${\vec x}_1,\ldots,{\vec x}_n,{\vec x}_{n+1}$.
\end{proof}

\end{example}

The first example is trivial and the second is fairly well-known, seen in \cite{kearns} and \cite{vcinterval}, but we believe the third, Example \ref{newvcexample}, is a new result.

A very important concept related to shattering is the growth of all the possible subsets $A\cap S$, for $A\in\mathcal{C}$, as $S\subseteq X$ increases in size. It is clear that this growth is always exponential if $\mathcal{C}$ has infinite VC dimension; Sauer's Lemma explains the growth when $\mathrm{VC}(\mathcal{C})<\infty$.

\subsection{Sauer's Lemma}

Given a concept class $\mathcal{C}$ of $X$, another way to express that $\mathcal{C}$ shatters a subset $S\subseteq X$, with cardinality $n$, is to consider the set of all $A\cap S$, where $A\in\mathcal{C}$. Following Chapter 4 of \cite{vid}, $\mathcal{C}$ shatters $S$ if and only if
\[
|\{A \cap S: A\in\mathcal{C}\}| = 2^n.
\]
More generally, for any subset $S\subseteq X$, define
\[
\pi(S;\mathcal{C}) = |\{A \cap S: A\in\mathcal{C}\}|
\] 
and 
\[
\pi(n;\mathcal{C}) = \max_{|S|= n}\pi(S;\mathcal{C}).
\]
Then, the VC dimension of $\mathcal{C}$ can now be expressed in terms of the growth of $\pi(n;\mathcal{C})$ as $n$ gets large.

\begin{prop} \label{pidefinition}
Given a concept class $\mathcal{C}$, the following conditions are equivalent:
\begin{enumerate}
\item $\mathrm{VC}(\mathcal{C}) \geq n$;
\item $\mathcal{C}$ shatters some subset $S\subseteq X$ with cardinality $n$;
\item $\pi (n;\mathcal{C}) = 2^n$.
\end{enumerate}
Moreover, the class $\mathcal{C}$ has infinite VC dimension if and only if $\pi(n;\mathcal{C}) = 2^n$ for all $n\in\mathbb{N}$. Conversely, $\mathcal{C}$ has finite VC dimension, say $\mathrm{VC}(\mathcal{C}) \leq d$, if and only if $\pi (n;\mathcal{C}) <2 ^n$ for all $n> d$.
\end{prop}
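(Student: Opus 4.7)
The plan is to establish the three-way equivalence by showing $(1) \Leftrightarrow (2)$ and $(2) \Leftrightarrow (3)$ separately; the two \emph{moreover} claims will then follow as direct translations of these equivalences.

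First I would record the auxiliary sublemma that if $\mathcal{C}$ shatters $S$ and $S' \subseteq S$, then $\mathcal{C}$ shatters $S'$: given any $B \subseteq S'$, pick $A \in \mathcal{C}$ with $A \cap S = B$ (which exists since $B \subseteq S$ and $S$ is shattered), and observe that $A \cap S' = (A \cap S) \cap S' = B$. With this in hand, $(2) \Rightarrow (1)$ is immediate from the definition of $\mathrm{VC}(\mathcal{C})$, while $(1) \Rightarrow (2)$ follows by taking any shattered set of size at least $n$ and restricting it to a subset of size exactly $n$, which is still shattered by the sublemma.

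For $(2) \Leftrightarrow (3)$, I would exploit the trivial bound $\pi(S;\mathcal{C}) \leq 2^{|S|}$, which holds because $\{A \cap S : A \in \mathcal{C}\} \subseteq \mathcal{P}(S)$. Equality in this bound for $|S| = n$ means every subset of $S$ is of the form $A \cap S$ for some $A \in \mathcal{C}$, which (in view of Proposition \ref{equivalentshattering}) is precisely the statement that $\mathcal{C}$ shatters $S$. Taking the maximum over $|S| = n$ then converts this into the equality $\pi(n;\mathcal{C}) = 2^n$, and conversely the existence of some $S$ with $|S| = n$ achieving the maximum $2^n$ produces a shattered set of cardinality $n$.

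The two \emph{moreover} statements require no new work. Infinite VC dimension is, by definition, condition $(1)$ holding for every $n \in \mathbb{N}$, which by the equivalence translates to $\pi(n;\mathcal{C}) = 2^n$ for all $n$. Similarly, $\mathrm{VC}(\mathcal{C}) \leq d$ is the negation of $(1)$ for every $n > d$, which via the equivalence becomes $\pi(n;\mathcal{C}) < 2^n$ for all $n > d$. The whole proposition is essentially an unpacking of definitions, so I do not anticipate any real obstacle; the only mildly substantive ingredient is the subset-of-shattered-is-shattered sublemma, and even that is a one-line set-theoretic computation.
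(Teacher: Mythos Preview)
Your proposal is correct and follows essentially the same approach as the paper, which gives only a one-line proof (``The proof follows from the fact that $\mathcal{C}$ shatters $S$ if and only if $\pi(S;\mathcal{C})=2^n$''). Your version is simply a more detailed unpacking of that same observation, and the subset-of-shattered-is-shattered sublemma you include is exactly the small ingredient the paper's terse proof leaves implicit for the equivalence $(1)\Leftrightarrow(2)$.
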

\begin{proof}
The proof follows from the fact that $\mathcal{C}$ shatters $S$ if and only if $\pi(S;\mathcal{C})=2^n$.
\end{proof}

The extremely interesting fact, as seen in the next theorem, is that if $\mathcal{C}$ has finite VC dimension $d$, then $\pi(n;\mathcal{C})$ is bounded by a polynomial in $n$ of degree $d$, for $n\geq d$. This result, called Sauer's Lemma, was first proven in 1972 by Sauer. In other words, as $n$ gets large, $\pi(n;\mathcal{C})$ is either always an exponential function with base $2$ or eventually bounded by a polynomial function of a fixed degree.

\begin{them}[Sauer's Lemma \cite{sauerlemma}]
Suppose a concept class $\mathcal{C}$ has finite VC dimension $d$. Then
\[
\pi (n;\mathcal{C}) \leq \left(\frac{en}{d}\right)^d,
\]
for all $n\geq d\geq 1$.
\end{them}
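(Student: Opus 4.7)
The plan is two-stage: first prove the sharper combinatorial bound $\pi(n;\mathcal{C}) \le \sum_{i=0}^{d}\binom{n}{i}$, and then dominate the right-hand side by $(en/d)^d$ via a short binomial-theorem estimate.

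For the main bound, I would induct on $n+d$. Fix a subset $S\subseteq X$ of cardinality $n$ and set $\mathcal{C}|_S = \{A \cap S : A \in \mathcal{C}\}$, so that $|\mathcal{C}|_S| \le \pi(n;\mathcal{C})$. Pick any $x \in S$, let $Y = S \setminus \{x\}$, and split $\mathcal{C}|_S$ into two families of subsets of $Y$: the projection $\mathcal{C}_1 = \{B \cap Y : B \in \mathcal{C}|_S\}$, and the \emph{doubled} family $\mathcal{C}_2$ of all $D \subseteq Y$ for which both $D$ and $D \cup \{x\}$ belong to $\mathcal{C}|_S$. A direct bijection argument (each element of $\mathcal{C}|_S$ is either the unique lift of an element of $\mathcal{C}_1 \setminus \mathcal{C}_2$ or one of the two lifts of an element of $\mathcal{C}_2$) yields $|\mathcal{C}|_S| = |\mathcal{C}_1| + |\mathcal{C}_2|$. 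Now $\mathcal{C}_1$ still has VC dimension at most $d$ on $Y$, while $\mathcal{C}_2$ has VC dimension at most $d-1$: if $T \subseteq Y$ were shattered by $\mathcal{C}_2$, then by the very definition of $\mathcal{C}_2$ each pair $(E, E\cup\{x\})$ with $E \subseteq T$ is realized inside $\mathcal{C}|_S$, so $\mathcal{C}|_S$ would shatter $T \cup \{x\}$, contradicting $\mathrm{VC}(\mathcal{C}) = d$ as soon as $|T| = d$.

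Applying the induction hypothesis on $Y$ (which has $n-1$ elements) gives
\[
|\mathcal{C}_1| \le \sum_{i=0}^{d}\binom{n-1}{i}, \qquad |\mathcal{C}_2| \le \sum_{i=0}^{d-1}\binom{n-1}{i},
\]
and Pascal's identity $\binom{n-1}{i}+\binom{n-1}{i-1}=\binom{n}{i}$ collapses the two sums to $\sum_{i=0}^{d}\binom{n}{i}$. Since $S$ was an arbitrary $n$-set, this bounds $\pi(n;\mathcal{C})$. To pass to the closed form, I would multiply by $(d/n)^d$; for $n \ge d \ge 1$ the factor $(d/n)^{d-i}$ lies in $[0,1]$ whenever $0 \le i \le d$, so
\[
\left(\frac{d}{n}\right)^d \sum_{i=0}^{d}\binom{n}{i} \le \sum_{i=0}^{d}\binom{n}{i}\left(\frac{d}{n}\right)^{i} \le \sum_{i=0}^{n}\binom{n}{i}\left(\frac{d}{n}\right)^{i} = \left(1 + \frac{d}{n}\right)^{n} \le e^{d},
\]
and rearranging gives $\pi(n;\mathcal{C}) \le (en/d)^d$.

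The main obstacle is the bookkeeping inside the induction: one has to verify carefully that the doubling construction really does drop the VC dimension by one in $\mathcal{C}_2$ (it is the freedom to toggle $x$ that supplies the extra shattered point), and that the base cases are compatible with the inductive step, e.g.\ $d=0$ forces $|\mathcal{C}|_S|\le 1 = \binom{n}{0}$, and $n=d$ gives the trivial $\pi(n;\mathcal{C})\le 2^n=\sum_{i=0}^{n}\binom{n}{i}$. Once the combinatorial identity is in hand, the final passage to $(en/d)^d$ is a one-line application of the binomial theorem and requires no further ideas.
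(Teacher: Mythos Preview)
Your argument is correct and is the standard Sauer--Shelah--Perles proof: the ``doubling'' decomposition $|\mathcal{C}|_S| = |\mathcal{C}_1| + |\mathcal{C}_2|$, the observation that $\mathcal{C}_2$ has VC dimension at most $d-1$ because toggling $x$ lifts any set shattered by $\mathcal{C}_2$ to a larger set shattered by $\mathcal{C}|_S$, and the Pascal collapse are all in order, as is the binomial-theorem estimate at the end.

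There is nothing to compare against, however: the paper does not give its own proof of Sauer's Lemma. The theorem is stated with a citation to Sauer's 1972 paper and then used as a black box in the discussion of PAC learnability; no argument is supplied in the text. So your proposal is not merely a different route---it is the only proof present. One small remark: your mention of $n=d$ as a ``base case'' is slightly misleading, since the induction on $n+d$ bottoms out at $d=0$ (where $|\mathcal{C}|_S|\le 1$) and $n=0$, not at the diagonal; but this is purely cosmetic and does not affect the validity of the argument.
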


Of course, everything in this subsection, including Sauer's Lemma, is true for any collection of subsets of any set but in the context of statistical learning theory, Sauer's Lemma is particularly useful because it is used to prove the equivalence of a concept class having finite VC dimension and the class being distribution-free PAC learnable.

\subsection{Characterization of concept class distribution-free PAC learning}

The following is one of the main theorems concerning PAC learning, whose proof results from Vapnik and Chervonenkis' paper \cite{vcdefinition} in 1971 and the 1989 paper \cite{vctheorem} by Blumer et al..

\begin{them}[\cite{vcdefinition} and \cite{vctheorem}]
\label{vctheorem}
Let $\mathcal{C}$ be a concept class of a measurable space $(X,\mathcal{S})$. The following are equivalent:
\begin{enumerate}
\item $\mathcal{C}$ is distribution-free Probably Approximately Correct learnable.
\item $\mathrm{VC}(\mathcal{C})<\infty$.
\end{enumerate}
\end{them}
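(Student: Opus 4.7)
The plan is to prove the two directions separately, with Sauer's Lemma providing the crucial bridge in the hard (sufficiency) direction and a ``no free lunch'' style adversarial argument handling the converse.

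For the direction $\mathrm{VC}(\mathcal{C}) < \infty \Rightarrow$ PAC learnable, I would use the empirical risk minimization algorithm: given a labeled sample for a target concept $A \in \mathcal{C}$, the algorithm $L$ returns any $H \in \mathcal{C}$ consistent with the sample labels. To show this works, one must establish a uniform convergence statement: the empirical error on the sample approximates $\mu(H \bigtriangleup A)$ uniformly over $H \in \mathcal{C}$ with high probability. The standard approach is the symmetrization (``double sample'') trick: bound
\[
\mu^m\bigl(\{(x_1,\ldots,x_m) : \exists H \in \mathcal{C},\ \mu(H \bigtriangleup A) \geq \epsilon \text{ and } H \text{ agrees with } A \text{ on the sample}\}\bigr)
\]
by introducing an independent ghost sample of size $m$ and transferring the probability over to the combined $2m$ points. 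On this fixed set of $2m$ points the number of distinct restrictions $H \cap \{x_1,\ldots,x_{2m}\}$ as $H$ ranges over $\mathcal{C}$ is at most $\pi(2m;\mathcal{C})$, and here Sauer's Lemma supplies the polynomial bound $\pi(2m;\mathcal{C}) \leq (2em/d)^d$. A union bound against the number of dichotomies, combined with the exponentially small chance that a ``bad'' $H$ (one with $\mu(H \bigtriangleup A) \geq \epsilon$) survives $m$ independent draws, shows that $m$ of order $(1/\epsilon)\bigl(d\log(1/\epsilon) + \log(1/\delta)\bigr)$ suffices; crucially this $M$ depends only on $\epsilon, \delta, d$ and not on $\mu$ or $A$.

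For the converse $\mathcal{C}$ PAC learnable $\Rightarrow \mathrm{VC}(\mathcal{C}) < \infty$, I argue contrapositively. Suppose $\mathrm{VC}(\mathcal{C}) = \infty$ and let $L$ be any candidate algorithm with claimed sample size $M$ for parameters $\epsilon = 1/8$, $\delta = 1/8$. Pick $n$ much larger than $M$ and a set $S = \{x_1,\ldots,x_n\} \subseteq X$ of size $n$ shattered by $\mathcal{C}$; equip $S$ with the uniform probability measure $\mu$. For a sample $(x_{i_1},\ldots,x_{i_M})$ drawn from $\mu^M$, with high probability at least $n/2$ points of $S$ are never seen by $L$. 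Because $\mathcal{C}$ realizes every subset of $S$, I can pair off targets $A, A' \in \mathcal{C}$ that coincide on the sample but differ on every unseen point; averaging over such adversarial choices of $A$ forces $L$'s hypothesis $H_M$ to err on at least a quarter of the mass of $\mu$, violating the PAC property. This contradicts PAC learnability and establishes finiteness of $\mathrm{VC}(\mathcal{C})$.

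The main obstacle, as is classical here, lies in the sufficiency direction: specifically the symmetrization inequality, which requires subtle measurability hypotheses on $\mathcal{C}$ (permissibility in the sense of Pollard) so that the suprema over $\mathcal{C}$ and the product-measure events under consideration are actually measurable. Once these measurability issues are granted, the combinatorial input from Sauer's Lemma and a careful union bound drive the argument in a routine way. The converse is comparatively soft, its only subtlety being an averaging argument to convert the adversarial construction into a lower bound on the probability of error.
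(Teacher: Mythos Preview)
The paper does not actually prove this theorem: it is stated with attribution to \cite{vcdefinition} and \cite{vctheorem}, followed only by the remark that ``both directions of the proof require expressing the number of sample training points required for learning in terms of the VC dimension of $\mathcal{C}$; Sauer's Lemma is used to provide a sufficient number of points required for learning in the direction $2) \Rightarrow 1)$.'' Your sketch is therefore far more detailed than anything in the paper, and there is no paper-proof to compare against beyond that single sentence.

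That said, your outline is the standard Vapnik--Chervonenkis/Blumer--Ehrenfeucht--Haussler--Warmuth argument and is consistent with the paper's brief hint: in the sufficiency direction you correctly isolate empirical risk minimization, the double-sample symmetrization, the application of Sauer's Lemma to bound $\pi(2m;\mathcal{C})$ polynomially, and the union bound yielding a sample complexity depending only on $d,\epsilon,\delta$. Your contrapositive for the necessity direction via an adversarial uniform measure on a shattered set is also the classical route. Your flagging of the measurability/permissibility hypothesis is appropriate and is a genuine technical condition that the paper (and many textbook statements) suppress; strictly speaking the theorem as stated in the paper is only true under such an assumption on $\mathcal{C}$, so you are right to highlight it as the main non-combinatorial obstacle.
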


Both directions of the proof require expressing the number of sample training points required for learning in terms of the VC dimension of $\mathcal{C}$; Sauer's Lemma is used to provide a sufficient number of points required for learning in the direction $2) \Rightarrow 1)$.

Using Theorem \ref{vctheorem}, one can more easily determine whether a given concept class is distribution-free PAC learnable.

\begin{example}
Let $X$ be any infinite set. Then the powerset $\mathcal{P}(X)$ is not distribution-free PAC learnable.
\end{example}

\begin{example}
The set of all hyperplanes $\mathcal{C}=\{H_{\vec{a},b}:\vec{a}\in\mathbb{R}^n\setminus\{\vec{0}\},b\in\mathbb{R}\}$, as defined in Example \ref{newvcexample}, is distribution-free PAC learnable.
\end{example}

Both examples come directly from the calculations of their concept classes' VC dimensions in Examples \ref{powerset} and \ref{newvcexample} and from Theorem \ref{vctheorem}.

Every concept class $\mathcal{C}$ can be viewed as a function class $\mathcal{F}_\mathcal{C}=\{\chi_A:X\to\unit:A\in\mathcal{C}\}$, as seen in Section \ref{pacsection}, so a natural question is whether the notion of shattering can be generalized. Indeed, the next section introduces the Fat Shattering dimension of scale $\epsilon$, which is a generalization of the VC dimension.

\newpage
\section{The Fat Shattering Dimension}
Let $\epsilon>0$ from this section onwards. A combinatorial parameter which generalizes the Vapnik-Chervonenkis dimension is the Fat Shattering dimension of scale $\epsilon$, defined first by Kearns and Schapire in 1994. 

This dimension, assigned to function classes, involves the notion of {\em $\epsilon$-shattering}, but similar to the notion of (regular) shattering, it can be defined for any collection of functions $f:X\to\unit$, where $X$ is any set, but for sake of this report, the following sections (still) assume $X=(X,\mathcal{S})$ is a measurable space and the collection of functions is a function class $\mathcal{F}$.

\begin{defin}[\cite{fatdefinition}]
Let $\mathcal{F}$ be a function class. Given a subset $S=\{x_1,\ldots,x_n\}\subseteq X$, the class $\mathcal{F}$ {\em $\epsilon$-shatters} $S$, with {\em witness} $c=(c_1,\ldots,c_n)\in \left[0,1\right]^n$, if for every $e\in \{0,1\}^n$, there exists $f\in \mathcal{F}$ such that
\[
f(x_i)\geq c_i + \epsilon \textnormal{ for }e_i=1,\textnormal{ and }f(x_i)\leq c_i-\epsilon \textnormal{ for }e_i=0.
\]
\end{defin}

\begin{defin}[\cite{fatdefinition}]
The {\em Fat Shattering dimension of scale $\epsilon>0$} of $\mathcal{F}$, denoted by $\mathrm{fat}_\epsilon(\mathcal{F})$, is defined to be the cardinality of the largest finite subset of $X$ that can be $\epsilon$-shattered by $\mathcal{F}$. If $\mathcal{F}$ can $\epsilon$-shatter arbitrarily large finite subsets, then the Fat Shattering dimension of scale $\epsilon$ of $\mathcal{F}$ is defined to be $\infty$.
\end{defin}

When the function class $\mathcal{F}$ consists of only functions taking values in $\{0,1\}$, then the Fat Shattering dimension of any scale $\epsilon\leq 1/2$ of $\mathcal{F}$ agrees with the VC dimension of the corresponding collection of subsets of $X$, induced by the (indicator) functions in $\mathcal{F}$.

\begin{prop}
Suppose a function class $\mathcal{F}$ consists of only binary functions $f:X\to\{0,1\}$. For every $f\in \mathcal{F}$, there exists a unique subset $A_f\subseteq X$ such that $\chi_{A_f} = f$. Moreover, write $\mathcal{C} = \{A_f:f\in\mathcal{F}\}$ and $\mathrm{VC}(\mathcal{C}) = \mathrm{fat}_\epsilon(\mathcal{F})$ for all $\epsilon\leq 0.5$.
\end{prop}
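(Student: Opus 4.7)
The plan is to split the proposition into two parts: the bijection between binary functions and subsets, and the equality of dimensions. For the first part, given $f\in\mathcal{F}$ with $f:X\to\{0,1\}$, I would simply set $A_f = f^{-1}(\{1\})$, observe that $\chi_{A_f}=f$ by construction, and note uniqueness since any other $A$ with $\chi_A = f$ would have to agree on preimages of $\{1\}$. This half is essentially bookkeeping.

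The heart of the proof is the dimension equality. I would show the stronger statement that for every finite $S = \{x_1,\ldots,x_n\}\subseteq X$ and every $\epsilon \leq 1/2$, the class $\mathcal{C}$ shatters $S$ if and only if $\mathcal{F}$ $\epsilon$-shatters $S$; taking suprema over $n$ then yields the claimed equality. For the direction that shattering implies $\epsilon$-shattering, I would take the constant witness $c=(1/2,\ldots,1/2)$: for any $e\in\{0,1\}^n$, Proposition 4.2 produces $A\in\mathcal{C}$ with $\chi_A(x_i)=e_i$, and the corresponding $f=\chi_A\in\mathcal{F}$ satisfies $f(x_i)=1\geq 1/2+\epsilon$ when $e_i=1$ and $f(x_i)=0\leq 1/2-\epsilon$ when $e_i=0$, both inequalities using $\epsilon\leq 1/2$.

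For the reverse direction, suppose $\mathcal{F}$ $\epsilon$-shatters $S$ with some witness $(c_1,\ldots,c_n)\in[0,1]^n$. The key observation, which I expect to be the main (though still minor) subtlety, is that the inequalities $f(x_i)\geq c_i+\epsilon$ and $f(x_i)\leq c_i-\epsilon$ for $f(x_i)\in\{0,1\}$ are \emph{only satisfiable at all} when $c_i\in[\epsilon,1-\epsilon]$; in that window, $f(x_i)\geq c_i+\epsilon$ forces $f(x_i)=1$ and $f(x_i)\leq c_i-\epsilon$ forces $f(x_i)=0$. Thus the $\epsilon$-shattering condition collapses to: for every $e\in\{0,1\}^n$ there exists $f\in\mathcal{F}$ with $f(x_i)=e_i$, which via the bijection says that for every $B\subseteq S$ there is $A\in\mathcal{C}$ with $\chi_A(x_i)=\chi_B(x_i)$ and hence $A\cap S = B$. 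Again by Proposition 4.2, $\mathcal{C}$ shatters $S$.

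Combining the two directions gives that the families of finite subsets shattered by $\mathcal{C}$ and $\epsilon$-shattered by $\mathcal{F}$ coincide, from which $\mathrm{VC}(\mathcal{C})=\mathrm{fat}_\epsilon(\mathcal{F})$ follows directly from the definitions, including the case where both dimensions are infinite. The only place where $\epsilon\leq 1/2$ is essential is in ensuring that the witness window $[\epsilon,1-\epsilon]$ is nonempty and in making the constant witness $1/2$ work; this is exactly why the hypothesis is stated with this threshold.
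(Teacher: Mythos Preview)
Your proposal is correct and follows essentially the same approach as the paper: both argue by showing that $\mathcal{C}$ shatters $S$ if and only if $\mathcal{F}$ $\epsilon$-shatters $S$, use the constant witness $c=(1/2,\ldots,1/2)$ for the forward direction, and for the converse observe that binary values together with $\epsilon>0$ force $f(x_i)\geq c_i+\epsilon\Rightarrow f(x_i)=1$ and $f(x_i)\leq c_i-\epsilon\Rightarrow f(x_i)=0$. Your additional remark that the witness must satisfy $c_i\in[\epsilon,1-\epsilon]$ is true but not needed for the converse; the paper simply uses $c_i\in[0,1]$ and $\epsilon>0$ directly.
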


\begin{proof}
The first statement, of the existence of a unique subset $A_f\subseteq X$ for every binary function $f$, is clear. Let $\epsilon\leq 0.5$. To show that $\mathrm{VC}(\mathcal{C}) = \mathrm{fat}_\epsilon(\mathcal{F})$, it suffices to prove that $\mathcal{C}$ shatters $S = \{x_1,\ldots,x_n\}$ if and only if $\mathcal{F}$ $\epsilon$-shatters $S$. 

The equivalent condition to shattering as seen in Proposition \ref{equivalentshattering} will be used. Suppose $\mathcal{C}$ shatters $S$ and define $c=(0.5,0.5,\ldots,0.5)\in\unit^n$. For every $e\in\{0,1\}^n$, there exists $A_f\in \mathcal{C}$, where $f\in \mathcal{F}$, such that
\[
\chi_{A_f}(x_i) = e_i,
\]
for all $i=1,\ldots, n$ and thus,
\[
f(x_i) = \chi_{A_f}(x_i) =e_i \geq0.5 + \epsilon \textnormal{ for }e_i=1
\]
and
\[
f(x_i)=\chi_{A_f}(x_i) = e_i\leq 0.5 -\epsilon \textnormal{ for }e_i=0.
\]

Conversely, suppose $\mathcal{F}$ $\epsilon$-shatters $S$, with witness $c = (c_1,\ldots,c_n)\in \unit^n$. Let $e\in \{0,1\}^n$ and there exists $f\in\mathcal{F}$ such that
\[
f(x_i)\geq c_i + \epsilon \textnormal{ for }e_i=1,\textnormal{ and }f(x_i)\leq c_i-\epsilon \textnormal{ for }e_i=0,
\]
but $f$ is binary and $\epsilon$ is strictly positive, so $f(x_i)\geq c_i + \epsilon$ implies $f(x_i) = 1$ for $e_i = 1$ and $f(x_i)\leq c_i - \epsilon$ implies $f(x_i) = 0$ for $e_i = 0$. As a result, consider $A_f\in\mathcal{C}$ and
\[
\chi_{A_f}(x_i) = f(x_i) = e_i
\]
for all $i=1,\ldots,n$. Therefore, $\mathrm{VC}(\mathcal{C}) = \mathrm{fat}_\epsilon(\mathcal{F})$.
\end{proof}

Here is an example of a commonly used function class which we proved, independent of any sources, to have infinite Fat Shattering dimension of scale $\epsilon$.

\begin{example}
Let $X = \mathbb{R}^+$ and let $\mathcal{F}$ be the set of all continuous functions $f:X\to\unit$. Then $\mathrm{fat}_\epsilon(\mathcal{F}) = \infty$ for all $0<\epsilon\leq 0.5$.
\end{example}
\begin{proof}
Suppose $0<\epsilon\leq 0.5$, and consider a collection of continuous $\unit$-valued functions defined as follows. Given $e\in\{0,1\}^\mathbb{N}$, a countable binary sequence, define $f_e:X\to\unit$ by
\[
f_e(x)=\begin{cases}
1&\quad\textnormal{ if }e_i=1\\
0&\quad\textnormal{ if }e_i=0,
\end{cases}
\]
if $x = i\in\mathbb{N}$. Otherwise, for $x\in\left[m,m+1\right]$, with $m\in\mathbb{N}$,
\[
f_e(x)=\begin{cases}
-(x-m) + 1&\quad\textnormal{ if }e_m = 1,e_{m+1} = 0\\
(x-m)&\quad\textnormal{ if }e_m = 0, e_{m+1} = 1\\
e_m&\quad\textnormal{ if } e_m = e_{m+1}.
\end{cases}
\]
For each $e\in\{0,1\}^\mathbb{N}$, $f_e$ is continuous because it is defined as a step function of lines which agree on the overlaps. Write $F = \{f_e:e\in\{0,1\}^\mathbb{N}\}$ and $F\subseteq \mathcal{F}$. To show that $\mathrm{fat}_\epsilon(\mathcal{F}) = \infty$, it suffices to prove that $\mathrm{fat}_\epsilon (F) = \infty$. Consider the subset $S=\{1,\ldots,n\}\subseteq X$ for any $n\in\mathbb{N}$, and the collection $F$ $\epsilon$-shatters $S$ with witness $c = (0.5,0.5,\ldots,0.5) \in \unit^n$: for each $e\in\{0,1\}^n$, it can be extended to a countable binary sequence ${\tilde e}$, where ${\tilde e_i} = e_i$ for all $i = 1,\ldots, n$ and ${\tilde e_i} = 0$ otherwise. Then, it is clear that
\[
f_{{\tilde e}}(x_i) = 1\geq c_i + \epsilon \textnormal{ for }{\tilde e_i}=1,\textnormal{ and }f(x_i) = 0\leq c_i-\epsilon \textnormal{ for }{\tilde e_i}=0,
\]
with $x_i = i\in S$ for $i = 1,\ldots, n$.
\end{proof}

With the generalization from a concept class to a function class, a natural question is whether the finiteness of the Fat Shattering dimension of all scales $\epsilon$ for a function class $\mathcal{F}$ is equivalent to $\mathcal{F}$ being distribution-free PAC learnable. This question is addressed in the following subsection.

\subsection{Sufficient condition for function class distribution-free PAC learning}

One direction of Theorem \ref{vctheorem} can be generalized and stated in terms of the Fat Shattering dimension of scale $\epsilon$ of a function class.

\begin{them}[\cite{fatshattering} and \cite{vid}]
\label{sufficientfatlearning}
Let $\mathcal{F}$ be a function class. If $\mathrm{fat}_\epsilon(\mathcal{F})<\infty$ for all $\epsilon>0$, then $\mathcal{F}$ is distribution-free PAC learnable.
\end{them}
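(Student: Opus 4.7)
The plan is to reduce distribution-free PAC learnability to a uniform Glivenko--Cantelli property for the class $\mathcal{F}$, and then to establish the latter from the finite Fat Shattering dimension via a covering-number estimate.

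First I would specify the learning algorithm $L$ by empirical risk minimization: given a labeled sample $(x_1,f(x_1)),\ldots,(x_m,f(x_m))$ coming from an unknown target $f\in\mathcal{F}$, let $H_m\in\mathcal{F}$ be any function that minimizes the empirical $L_1$ error
\[
\widehat{E}_m(h,f) = \frac{1}{m}\sum_{i=1}^m |h(x_i)-f(x_i)|.
\]
Because $f$ itself achieves $\widehat{E}_m(f,f)=0$, the minimizer $H_m$ satisfies $\widehat{E}_m(H_m,f)=0$. The task then reduces to showing that, with $\mu^m$-probability at least $1-\delta$, every $h\in\mathcal{F}$ with $\widehat{E}_m(h,f)=0$ must also satisfy $E_\mu(h,f)<\epsilon$. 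Equivalently, one needs a uniform deviation bound of the form
\[
\mu^m\Bigl(\sup_{h\in\mathcal{F}} \bigl|E_\mu(h,f)-\widehat{E}_m(h,f)\bigr| > \epsilon\Bigr)\leq \delta
\]
for all sufficiently large $m$, independently of $\mu$ and $f$.

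Next I would set up the auxiliary class $\mathcal{G}=\{|h-f|:h\in\mathcal{F}\}$ of $\unit$-valued measurable functions and establish uniform convergence for $\mathcal{G}$ by the classical symmetrization trick: replace the supremum of $|\text{expected}-\text{empirical}|$ by a supremum controlled by a ghost sample and Rademacher averages, and then cover $\mathcal{G}$ in an appropriate $L_p$-metric at scale $\epsilon/8$ (say). The crucial ingredient is the scale-sensitive analog of Sauer's Lemma: if $\mathrm{fat}_{\epsilon'}(\mathcal{F}) = d<\infty$, then for any finite sample $\{x_1,\ldots,x_m\}\subseteq X$ the empirical $L_\infty$ covering number of $\mathcal{F}|_{\{x_1,\ldots,x_m\}}$ at scale $\epsilon'$ is bounded by a function of $d$ and $m$ that is polynomial in $m$ for fixed $d$. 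Substituting this into the symmetrization bound and applying a union bound over the cover yields an exponential tail estimate in $m$, which can be driven below $\delta$ by choosing $m$ large enough in terms of $\epsilon$, $\delta$ and $\mathrm{fat}_{\epsilon'}(\mathcal{F})$.

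Finally I would collect the ingredients: fix $\epsilon,\delta>0$, set $\epsilon'=\epsilon/8$, invoke $\mathrm{fat}_{\epsilon'}(\mathcal{F})<\infty$ from the hypothesis, plug into the covering-number Glivenko--Cantelli estimate of the previous step, and read off a sample size $M=M(\epsilon,\delta)$ after which ERM produces a hypothesis $H_m$ with $E_\mu(H_m,f)<\epsilon$ with confidence at least $1-\delta$. Since $M$ does not depend on $\mu$ or $f$, the class $\mathcal{F}$ is distribution-free PAC learnable. The main technical obstacle is the scale-sensitive Sauer-type bound: controlling the $L_p$-covering numbers of $\mathcal{F}$ in terms of the Fat Shattering dimension is genuinely nontrivial and is precisely where the works of Alon--Ben-David--Cesa-Bianchi--Haussler and the Mendelson--Vershynin refinements enter. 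Everything else (symmetrization, union bound, ERM reduction) is standard once that estimate is in hand.
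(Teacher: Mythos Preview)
The paper does not actually supply a proof of this theorem: it is stated with attribution to \cite{fatshattering} and \cite{vid} and then immediately followed by a discussion of why the converse fails. So there is no in-paper argument to compare your proposal against.

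That said, your sketch is essentially the standard route taken in those references: reduce learnability to a uniform Glivenko--Cantelli statement via ERM, then control the uniform deviations by symmetrization together with a scale-sensitive Sauer-type bound on empirical covering numbers in terms of $\mathrm{fat}_{\epsilon'}(\mathcal{F})$. Two small points worth tightening if you flesh this out: (i) a minimizer of $\widehat{E}_m(\cdot,f)$ over $\mathcal{F}$ need not exist, so one should work with an $\epsilon/2$-approximate empirical minimizer instead of asserting $\widehat{E}_m(H_m,f)=0$; and (ii) the passage from covering numbers of $\mathcal{F}$ to those of the loss class $\mathcal{G}=\{|h-f|:h\in\mathcal{F}\}$ should be justified explicitly (it follows since $t\mapsto|t-f(x)|$ is $1$-Lipschitz). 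With those adjustments your outline matches the argument the paper is citing.
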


However, the converse to Theorem \ref{sufficientfatlearning} is false. There exists a distribution-free PAC learnable function class with infinite Fat Shattering dimension of some scale $\epsilon$. 

In fact, for every concept class $\mathcal{C}$ with cardinality $\aleph_0$ or $2^{\aleph_0}$, there is an associated function class $\mathcal{F}_\mathcal{C}$ defined as follows. Set up a bijection $b:\mathcal{C} \to \left[0,1/3\right]$ or to $\left[0,1/3\right]\cap \mathbb{Q}$, depending on the cardinality of $\mathcal{C}$, and for every $A\in\mathcal{C}$, define a function $f_A:X\to\unit$ by
\[
f_A(x) = \chi_A(x)+(-1)^{\chi_A(x)}b(A).
\]
Now, write $\mathcal{F}_\mathcal{C}=\{f_A:A\in\mathcal{C}\}$. Note that $\mathcal{F}_\mathcal{C}$ can be thought of the collection of all indicator functions of $A\in\mathcal{C}$, except that each ``indicator" function $f_A$ has two unique identifying points $b(A)$ and $1-b(A)$, instead of simply $0$ and $1$. The following proposition provides many counterexamples to Theorem \ref{sufficientfatlearning}, which are much simpler than the one found in \cite{vid}.

The construction of the function class $\mathcal{F}_\mathcal{C}$ and the proposition below are developed from an idea of Example 2.10 in \cite{pestovnote}.

\begin{prop}\label{functionconverse}
Let $\mathcal{C}$ be a concept class. The associated function class $\mathcal{F}_\mathcal{C}=\{f_A:A\in\mathcal{C}\}$, defined in the previous paragraph, is always distribution-free PAC learnable; this class has infinite Fat Shattering dimension of all scales $\epsilon<1/6$ if $\mathcal{C}$ has infinite VC dimension.
\end{prop}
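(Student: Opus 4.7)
The plan is to exploit the fact that $f_A$ secretly encodes $A$ in two independent ways: the value $f_A(x)$ lies in $[0,1/3]$ if $x\notin A$ and in $[2/3,1]$ if $x\in A$ (because $b(A)\in[0,1/3]$), and within either interval the ``tag'' $b(A)$ is visible from a single evaluation. Since $b$ is a bijection, reading off $b(A)$ from any one labeled point recovers $A$ exactly.

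For the PAC learnability claim, I would define an explicit learning algorithm $L$ that acts on a sample of size $m\geq 1$ by looking only at the first labeled pair $(x_1,y_1)$ where $y_1=f_A(x_1)$: set $\beta=y_1$ if $y_1\leq 1/3$ and $\beta=1-y_1$ if $y_1\geq 2/3$, and output $H_m=b^{-1}(\beta)\in\mathcal{C}$. Because the two value-intervals are disjoint the rule is unambiguous, and injectivity of $b$ gives $H_m=A$ on the nose, so $E_\mu(\chi_{H_m},f_A)=0$ for every $\mu$, every $A\in\mathcal{C}$, and every $m\geq 1$. In particular, one can take $M(\epsilon,\delta)=1$ for every $\epsilon,\delta>0$, independently of any combinatorial properties of $\mathcal{C}$.

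For the Fat Shattering claim, assume $\mathrm{VC}(\mathcal{C})=\infty$ and fix $0<\epsilon<1/6$. I would show $\mathrm{fat}_\epsilon(\mathcal{F}_\mathcal{C})\geq n$ for every $n\in\mathbb{N}$, which forces the dimension to be infinite. Choose $S=\{x_1,\ldots,x_n\}$ shattered by $\mathcal{C}$ (possible since $\mathrm{VC}(\mathcal{C})=\infty$), and take the constant witness $c=(1/2,\ldots,1/2)\in\unit^n$. For any $e\in\{0,1\}^n$ pick $A\in\mathcal{C}$ with $\chi_A(x_i)=e_i$; then
\[
f_A(x_i)=\chi_A(x_i)+(-1)^{\chi_A(x_i)}b(A)=\begin{cases}1-b(A)\geq 2/3>1/2+\epsilon&\text{if }e_i=1,\\ b(A)\leq 1/3<1/2-\epsilon&\text{if }e_i=0,\end{cases}
\]
where the strict inequalities use $b(A)\in[0,1/3]$ and $\epsilon<1/6$. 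This is exactly the definition of $\epsilon$-shattering of $S$ with witness $c$.

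The main obstacle is really only conceptual rather than technical: one has to notice the encoding trick, after which each half reduces to an arithmetic check. The one subtle point worth verifying explicitly is that the decoding rule in the algorithm is well defined — this is what the separation between $[0,1/3]$ and $[2/3,1]$ guarantees — and that the numerical threshold $1/6$ is precisely the gap between $1/2$ and the nearest endpoint of either interval, which is why the Fat Shattering argument collapses exactly at $\epsilon=1/6$.
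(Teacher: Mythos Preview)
Your argument is correct and matches the paper's approach exactly: recover $A$ from a single labeled value via the injective tag $b(A)$, and $\epsilon$-shatter any $\mathcal{C}$-shattered set using the constant witness $c=(1/2,\ldots,1/2)$ together with the $[0,1/3]$ versus $[2/3,1]$ separation. One small notational slip: the learning map must output the function $f_{A}\in\mathcal{F}_\mathcal{C}$ rather than the concept $H_m\in\mathcal{C}$, so the relevant error is $E_\mu(f_{A},f_A)=0$, not $E_\mu(\chi_{H_m},f_A)$; with that correction your write-up is actually more explicit than the paper's.
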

\begin{proof}
The function class $\mathcal{F}_\mathcal{C}$ is distribution-free PAC learnable because every function $f_A\in\mathcal{F}_\mathcal{C}$ can be uniquely identified with just one point $x_0\in X$ in any labeled sample: $f_A(x_0) \in \{b(A), 1 - b(A)\}$ uniquely determines $A$ and thus, $f_A$.

Furthermore, suppose $\mathcal{C}$ has infinite VC dimension. Let $n\in\mathbb{N}$ be arbitrary and because $\mathrm{VC}(\mathcal{C})=\infty$, there exists $S=\{x_1,\ldots,x_n\}$ such that $\mathcal{C}$ shatters $S$. Suppose $\epsilon<1/6$ and we claim that $\mathcal{F}_\mathcal{C}$ $\epsilon$-shatters $S$ with witness $c=(0.5,\ldots,0.5)\in\unit^n$. Indeed, let $e\in\{0,1\}^n$ and there exists $A\in\mathcal{C}$ such that
\[
\chi_A(x_i) = e_i,
\]
for all $i=1,\ldots,n$, by Proposition \ref{equivalentshattering}. As a result,
\[
f_A(x_i) = 1 - b(A) \geq 0.5 + \epsilon\textnormal{ for } e_i = 1
\]
and
\[
f_A(x_i) = b(A) \leq 0.5 - \epsilon\textnormal{ for } e_i = 0.
\]

Consequently, $\mathcal{F}_\mathcal{C}$ has infinite Fat Shattering dimension of all scales $\epsilon<1/6$.
\end{proof}

The next section explains the main result of our research: bounding the Fat Shattering dimension of scale $\epsilon$ of a composition function class which is built with a continuous logic connective.
\newpage
\section{The Fat Shattering Dimension of a Composition Function Class}

The goals of this section are to construct a new function class from old ones by means of a continuous logic connective and to bound the Fat Shattering dimension of scale $\epsilon$ of the new function class in terms of the dimensions of the old ones. The following subsection provides this construction, which can be found in Chapter 4 of \cite{vid}, in the context of concept classes using a connective of classical logic.

\subsection{Construction in the context of concept classes}\label{vcu}

Let $\mathcal{C}_1, \mathcal{C}_2,\ldots,\mathcal{C}_k$ be concept classes, where $k\geq 2$, and let $u:\{0,1\}^k\to\{0,1\}$ be any function, commonly known as a connective of classical logic. A new collection of subsets of $X$ arises from $\mathcal{C}_1,\ldots,\mathcal{C}_k$ as follows. 

As mentioned earlier in this report, every element $A\in \mathcal{C}_i$ can be identified as a binary function $f:X\to\{0,1\}$, namely its characteristic function $f = \chi_A$, and vice versa. Now, for any $k$ functions $f_1,\ldots,f_k:X\to\{0,1\}$, where $f_i\in\mathcal{C}_i$ with $i=1,\ldots,k$, consider a new function $u(f_1,\ldots,f_k):X\to\{0,1\}$ defined by
\[
u(f_1,\ldots,f_k)(x) = u(f_1(x),\ldots,f_k(x)).
\]
The set of all possible $u(f_1,\ldots,f_k)$, denoted by $u(\mathcal{C}_1,\ldots,\mathcal{C}_k)$, is given by
\[
u(\mathcal{C}_1,\ldots,\mathcal{C}_k) = \{u(f_1,\ldots,f_k) : f_i\in\mathcal{C}_i\}.
\]

For instance, when $k=2$, we can consider the ``Exclusive Or" connective $\oplus:\{0,1\}^2\to\{0,1\}$ defined by
\[
p\oplus q = (p\wedge \neg q) \vee (\neg p \wedge q),
\]
which corresponds to the symmetric difference operation. Then, our new concept class constructed from $\mathcal{C}_1$ and $\mathcal{C}_2$ is
\[
\{A_1\bigtriangleup A_2:A_1\in\mathcal{C}_1,A_2\in\mathcal{C}_2\}.
\]

The next theorem states that if $\mathcal{C}_1,\mathcal{C}_2,\ldots,\mathcal{C}_k$ all have finite VC dimension to start with, then regardless of $u$, the new collection $u(\mathcal{C}_1,\ldots,\mathcal{C}_k)$ always has finite VC dimension.
\begin{them}[\cite{vid}]
Let $k\geq 2$. Suppose $\mathcal{C}_1,\ldots,\mathcal{C}_k$ are concept classes, each viewed as a collection of binary functions, and $u:\{0,1\}^k\to \{0,1\}$ is any function. If the VC dimension of $\mathcal{C}_i$ is finite for all $i=1,\ldots, k$. Then there exists a constant $\alpha=\alpha_k$\footnote{More specifically, $\alpha = \alpha_k$ is the smallest integer such that
\[
k < \frac{\alpha}{\log(e\alpha)}.
\]}, which depends only on $k$, such that
\[
\mathrm{VC}(u(\mathcal{C}_1,\ldots,\mathcal{C}_k))< d\alpha_k,
\]
where $d =\displaystyle \max_{i=1}^k \mathrm{VC}(\mathcal{C}_i)$.
\end{them}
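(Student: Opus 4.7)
The plan is to argue via the growth function $\pi(n;\cdot)$ and Sauer's Lemma, following the standard pattern for bounding VC dimension of a Boolean-combination class. First I would suppose, for contradiction, that $u(\mathcal{C}_1,\ldots,\mathcal{C}_k)$ shatters some finite subset $S\subseteq X$ with $|S|=n$, so that $\pi(S;u(\mathcal{C}_1,\ldots,\mathcal{C}_k))=2^n$ by Proposition \ref{pidefinition}. The goal is then to show that $n<d\alpha_k$.

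The key observation, which converts the problem into one about the individual classes, is that the trace of $u(f_1,\ldots,f_k)$ on $S$ is completely determined by the tuple of traces $(f_1|_S,\ldots,f_k|_S)$. This gives the product bound
\[
\pi(S;u(\mathcal{C}_1,\ldots,\mathcal{C}_k))\leq \prod_{i=1}^k\pi(S;\mathcal{C}_i)\leq \prod_{i=1}^k\pi(n;\mathcal{C}_i).
\]
Assuming $n\geq d$ (otherwise we are already done since we want $n<d\alpha_k$ and $\alpha_k\geq 1$), I would now apply Sauer's Lemma to each factor, using $\mathrm{VC}(\mathcal{C}_i)\leq d$, to obtain
\[
2^n\leq \prod_{i=1}^k\left(\frac{en}{d}\right)^d=\left(\frac{en}{d}\right)^{dk}.
\]

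The remaining step is to extract an explicit bound on $n$ from this transcendental inequality. Taking logarithms and dividing by $d$ gives $n/d\leq k\log_2(en/d)$, equivalently, with $m=n/d$,
\[
\frac{m}{\log(em)}\leq k\cdot(\text{const}),
\]
so that $m$ (and therefore $n/d$) cannot exceed the threshold where $\alpha/\log(e\alpha)$ overtakes $k$. By the definition of $\alpha_k$ given in the footnote (the least integer with $k<\alpha_k/\log(e\alpha_k)$), this forces $n/d<\alpha_k$, i.e.\ $n<d\alpha_k$, which is what we want.

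The main obstacle will be the last step: carefully matching the algebra so that the threshold $\alpha_k$ really is the one appearing in the footnote, paying attention to the base of the logarithm and the direction of the inequality. Once the monotonicity of $\alpha\mapsto \alpha/\log(e\alpha)$ for $\alpha\geq 1$ is noted, the bound on $n$ drops out immediately, but the bookkeeping between $\log 2$ and $\log$ factors has to be done with some care to land on exactly $\alpha_k$ and not some slightly weaker constant. Everything else—the product bound on the growth function and the invocation of Sauer's Lemma—is routine.
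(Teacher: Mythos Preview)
Your approach is correct and is exactly the standard argument: the paper itself does not supply a proof but only remarks that ``the proof of this theorem can be found in \cite{vid} and uses Sauer's Lemma to bound the VC dimension of $u(\mathcal{C}_1,\ldots,\mathcal{C}_k)$,'' which is precisely the product-of-growth-functions route you outline. Your caution about the base of the logarithm in the final step is well placed, but once you note that $x\mapsto x/\log(ex)$ is increasing for $x\geq 1$, the contradiction with the defining inequality $k<\alpha_k/\log(e\alpha_k)$ follows immediately from $m=n/d\geq \alpha_k$.
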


The proof of this theorem can be found in \cite{vid} and uses Sauer's Lemma to bound the VC dimension of $u(\mathcal{C}_1,\ldots,\mathcal{C}_k)$. The main objective of our project was to generalize this theorem for function classes, in terms of the Fat Shattering dimension of scale $\epsilon$, but the connective of classical logic $u$ would have to be replaced by a continuous logic connective, a continuous function $u:\unit^k\to\unit$.

\subsection{Construction of new function class with continuous logic connective}

In first-order logic, there are only two truth-values $0$ or $1$, so a connective is a function $\{0,1\}^k\to\{0,1\}$ in the classical sense. However, in continuous logic, truth-values can be found anywhere in the unit interval $\unit$. Therefore, we should consider a function $u:\unit^k\to\unit$, which will transform function classes, and require that $u$ be a continuous logic connective. In other words, $u$ should be continuous from the (product) metric space $\unit^k$ to the unit interval \cite{metricstructure}; in fact, because $u$ is continuous from a compact metric space to a metric space, it is automatically uniformly continuous.

The following provides the definition of a uniformly continuous function $u$ from any metric space to another, but we must first qualify $u$ with a modulus of uniform continuity.

\begin{defin}[See e.g. \cite{metricstructure}]
A {\em modulus of uniform continuity} is any function $\delta: \left(0,1\right]\to\left(0,1\right]$. 
\end{defin}

\begin{defin}[See e.g. \cite{metricstructure}]
\label{uniform}
Let $(M_1,d_1)$ and $(M_2,d_2)$ be two metric spaces. A function $u:M_1\to M_2$ is {\em uniformly continuous} if there exists (a modulus of uniform continuity) $\delta:\left(0,1\right]\to\left(0,1\right]$ such that for all $\epsilon\in\left(0,1\right]$ and $m_1,m_2\in M_1$, if $d_1(m_1,m_2)<\delta(\epsilon)$, then $d_2(u(m_1),u(m_2))<\epsilon$.

Such a $\delta$ is called a {\em modulus of uniform continuity for} $u$.
\end{defin}

In particular, $u:\unit^k\to\unit$, where $\unit^k$ is equipped with the $L_2$ product distance $d^2$, is uniformly continuous with modulus of uniform continuity $\delta$ if for every $\epsilon\in\left(0,1\right]$ and for every $(r_1,\ldots,r_k),(r_1',\ldots,r_k')\in\unit^k$,
\[
d^2 ((r_1,\ldots,r_k),(r_1',\ldots,r_k'))<\delta(\epsilon) \Rightarrow |u(r_1,\ldots,r_k) - u(r_1',\ldots,r_k')|<\epsilon.
\]

Given function classes $\mathcal{F}_1,\ldots,\mathcal{F}_k$ and a uniformly continuous function $u:\unit^k\to\unit$, consider the new function class $u(\mathcal{F}_1,\ldots,\mathcal{F}_k)$ defined by
\[
u(\mathcal{F}_1,\ldots,\mathcal{F}_k) = \{u(f_1,\ldots,f_k) : f_i\in\mathcal{F}_i\},
\]
where $u(f_1,\ldots,f_k)(x) =  u(f_1(x),\ldots,f_k(x))$ for all $x\in X$, just as in Section \ref{vcu} for concept classes, with $f_i \in \mathcal{F}_i$ and $i=1,\ldots, k$. Our main result states that the Fat Shattering dimension of scale $\epsilon$ of $u(\mathcal{F}_1,\ldots,\mathcal{F}_k)$ is bounded by a sum of the Fat Shattering dimensions of scale $\delta(\epsilon,k)$ of $\mathcal{F}_1,\ldots,\mathcal{F}_k$, where $\delta(\epsilon,k)$ is a function of the modulus of uniform continuity $\delta(\epsilon)$ for $u$ and $k$. It is a known result, seen in Chapter 5 of \cite{vid}, that this new class $u(\mathcal{F}_1,\ldots,\mathcal{F}_k)$ has finite Fat Shattering dimension of all scales $\epsilon>0$ (and thus, it is distribution-free PAC learnable) if each of $\mathcal{F}_1,\ldots,\mathcal{F}_k$ has finite Fat Shattering dimension of all scales, but no bounds were known.

\subsection{Main Result}

Fix $k\geq 2$ and the following theorem is our main new result.
\begin{them}\label{mainresult}
Let $\epsilon>0$, $\mathcal{F}_1,\ldots,\mathcal{F}_k$ be function classes of $X$, and $u:\unit^k\to\unit$ be a uniformly continuous function with modulus of continuity $\delta(\epsilon)$. Then
\[
\mathrm{fat}_{\epsilon}(u(\mathcal{F}_1,\ldots,\mathcal{F}_k) ) \leq \left(\frac{K\log(4c'k\sqrt{k}/(\delta(\epsilon/(2c'))\epsilon))}{K'\log(2)}\right)\sum_{i=1}^n \mathrm{fat}_{c\frac{\delta(\epsilon/(2c'))\epsilon}{k\sqrt{k}}}(\mathcal{F}_i),
\]
where $c,c', K, K'$ are some absolute constants.
\end{them}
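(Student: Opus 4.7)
Write $\mathcal{G} = u(\mathcal{F}_1,\ldots,\mathcal{F}_k)$. The strategy is to estimate $\mathrm{fat}_\epsilon(\mathcal{G})$ by routing through $L_2$-covering numbers, sandwiching between a Talagrand-type lower bound and a Mendelson--Vershynin upper bound. Concretely, the two external inputs are: (i) for any $\unit$-valued function class $\mathcal{F}$ and any probability measure $\mu$, $\log N(\mathcal{F}, L_2(\mu), s) \le K\,\mathrm{fat}_{cs}(\mathcal{F})\log(2/s)$ with absolute constants $K,c$; and (ii) for every such $\mathcal{F}$ and every $\epsilon>0$, there exists a probability measure $\mu$ on the domain with $\mathrm{fat}_\epsilon(\mathcal{F}) \le K'\log_2 N(\mathcal{F}, L_2(\mu), c'\epsilon)$, with absolute constants $K',c'$. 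The four constants $c,c',K,K'$ appearing in the conclusion are exactly these.

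The combinatorial heart of the proof is a product bound on $L_2(\mu)$-coverings of $\mathcal{G}$ via uniform continuity of $u$. Fix $\mu$ and tuples $\vec f,\vec{\tilde f}\in\prod\mathcal{F}_i$. The definition of the modulus of continuity gives the pointwise implication that $d^2(\vec f(x),\vec{\tilde f}(x))<\delta(\eta')$ implies $|u(\vec f)(x)-u(\vec{\tilde f})(x)|<\eta'$, while off this good set the difference is at most $1$. Markov's inequality applied to $h(x)^2:=\sum_i(f_i(x)-\tilde f_i(x))^2$ then yields
\[
\|u(\vec f)-u(\vec{\tilde f})\|_{L_2(\mu)}^2 \;\le\; \eta'^{\,2} \;+\; \frac{1}{\delta(\eta')^2}\sum_{i=1}^k\|f_i-\tilde f_i\|_{L_2(\mu)}^2.
\]
Hence a product of $L_2(\mu)$ $s$-covers of the individual $\mathcal{F}_i$ produces an $L_2(\mu)$ $\xi$-cover of $\mathcal{G}$, with the relation among $\xi$, $\eta'$ and $s$ fixed by balancing the two terms on the right and distributing the permissible squared error across the $k$ coordinates. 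The outcome is $s \asymp \delta(\xi/c_1)\,\xi/k^{3/2}$ for an absolute constant $c_1$, giving
\[
\log N(\mathcal{G}, L_2(\mu), \xi) \;\le\; \sum_{i=1}^k \log N(\mathcal{F}_i, L_2(\mu), s).
\]

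The three pieces now assemble. Apply the Talagrand lower bound to $\mathcal{G}$ at scale $\epsilon$, reducing matters to bounding $\log N(\mathcal{G}, L_2(\mu), c'\epsilon)$; apply the product-cover inequality above with $\xi=c'\epsilon$, which (once $c_1$ is calibrated so that $\xi/c_1=\epsilon/(2c')$) fixes $s$ as a constant multiple of $\delta(\epsilon/(2c'))\,\epsilon/(k\sqrt{k})$; finally apply Mendelson--Vershynin to each $\log N(\mathcal{F}_i, L_2(\mu), s)$. After dividing by $K'\log 2$, the stated inequality emerges, with $\log(2/s)$ collapsing to $\log(4c'k\sqrt{k}/(\delta(\epsilon/(2c'))\,\epsilon))$ once the absolute constants are absorbed into $K$. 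The principal technical obstacle is the balancing in the product-cover step: the Markov argument couples the scale $\eta'$ (at which $\delta$ is evaluated) with the per-coordinate radius $s$ in a nonlinear way, and the choices have to be threaded so that, after the Talagrand substitution $\xi=c'\epsilon$, the modulus is evaluated exactly at $\epsilon/(2c')$ and the fat-shattering scale on the right lands exactly at $c\,\delta(\epsilon/(2c'))\,\epsilon/(k\sqrt{k})$, with no extraneous $k$-dependent factors escaping into the coefficient or into the argument of $\delta$.
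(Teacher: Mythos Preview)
Your proposal is correct and follows essentially the same route as the paper: Talagrand's lower covering bound (Theorem~\ref{talagrand}) and the Mendelson--Vershynin upper bound (Theorem~\ref{mendelson}) sandwich the fat-shattering dimension, with a product-cover estimate in between driven by the uniform continuity of $u$. The paper factors that middle step into two pieces---a per-coordinate Chebyshev argument plus union bound (Lemma~\ref{urelationphi}) showing that the induced map $\phi$ has $L_2$-modulus $\delta(\epsilon/2)\epsilon/(2k)$, followed by the product-cover bound (Proposition~\ref{productcovering}) costing a further $\sqrt{k}$---whereas you merge these into a single Markov bound on $h^2=\sum_i(f_i-\tilde f_i)^2$.

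One slip worth flagging: your own displayed inequality actually yields $s\asymp\delta(\xi/c_1)\,\xi/\sqrt{k}$, not $\xi/k^{3/2}$. Balancing $\eta'^{2}$ against $ks^2/\delta(\eta')^2$ at a common level $\xi^2/2$ gives $s=\delta(\eta')\xi/\sqrt{2k}$, with only a single $\sqrt{k}$. So your argument in fact proves a \emph{sharper} bound than the stated theorem; the extra factor of $k$ in the paper's $k\sqrt{k}$ is an artifact of the per-coordinate union bound in Lemma~\ref{urelationphi}, which your direct Markov step on the sum avoids. Since $\mathrm{fat}_\epsilon$ is monotone decreasing in the scale and the logarithmic prefactor is monotone in $1/s$, the theorem as stated still follows a fortiori from your estimate, so this does not damage the proof---but you should not claim your balancing produces $k^{3/2}$ when it produces $\sqrt{k}$.
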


Extracting the actual values of these absolute constants is not easy, and we hope to find them in future research. For this reason, comparing the bound in Theorem \ref{mainresult} with the existing estimate for the VC dimension of a composition concept class is difficult; however, in statistical learning theory, estimates for function class learning are generally much worse than estimates for concept class learning.

In order to prove Theorem \ref{mainresult}, for clarity, we first introduce an auxiliary function $\phi:\mathcal{F}_1\times\ldots\times\mathcal{F}_k\to \unit^X$, which is uniformly continuous from the metric space $\mathcal{F}_1\times\ldots\times\mathcal{F}_k$ with the $L_2$ product distance ${\tilde d^2}$ to the metric space $\unit^X$ with distance induced by the $L_2(\mu)$ norm, and prove the following lemma.
\begin{lemma}\label{auxphi}
Let $\epsilon>0$, $\mathcal{F}_1,\ldots,\mathcal{F}_k$ be function classes of $X$, and $\phi: \mathcal{F}_1\times\ldots\times\mathcal{F}_k\to \unit^X$ be uniformly continuous with some modulus of continuity $\delta(\epsilon,k)$, a function of $\epsilon$ and $k$. Then
\[
\mathrm{fat}_{c'\epsilon}(\phi(\mathcal{F}_1\times\ldots\times\mathcal{F}_k) ) \leq \left(\frac{K\log(2\sqrt{k}/\delta(\epsilon,k))}{K'\log(2)}\right)\sum_{i=1}^k \mathrm{fat}_{c\frac{\delta(\epsilon,k)}{\sqrt{k}}}(\mathcal{F}_i),
\]
where $c,c', K, K'$ are some absolute constants and the symbol $\phi(\mathcal{F}_1\times\ldots\times\mathcal{F}_k)$ simply represents the image of $\phi$.
\end{lemma}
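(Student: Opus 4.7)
The plan is to bracket the $L_2(\nu)$ covering number of $\mathcal{G} := \phi(\mathcal{F}_1\times\ldots\times\mathcal{F}_k)$ between two estimates, for a well-chosen empirical probability measure $\nu$ on $X$, and to match them. On the lower side, a coding argument in the style of Talagrand turns fat shattering at scale $c'\epsilon$ into an exponentially large $L_2(\nu)$-packing of $\mathcal{G}$. On the upper side, uniform continuity of $\phi$ pulls an $\epsilon$-cover of $\mathcal{G}$ back to a $\delta(\epsilon,k)$-cover of the product space; this cover splits multiplicatively across the $k$ factors, and each factor's $L_2(\nu)$ covering number is then controlled by the Mendelson-Vershynin estimate in terms of its own fat shattering dimension.

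For the lower bound, set $D = \mathrm{fat}_{c'\epsilon}(\mathcal{G})$ with shattered set $\{x_1,\ldots,x_D\}$ and witness $c \in \unit^D$. For each $e \in \{0,1\}^D$ pick $g_e \in \mathcal{G}$ realizing the shattering, so that $|g_e(x_i) - g_{e'}(x_i)| \geq 2c'\epsilon$ on every coordinate where $e_i \neq e'_i$. Let $\nu$ be the uniform measure on the shattered points. By Gilbert-Varshamov there is a binary code $E \subseteq \{0,1\}^D$ of size $2^{K'D}$ with minimum Hamming distance at least $D/4$; the family $\{g_e : e \in E\}$ is then pairwise separated in $L_2(\nu)$ by at least $c'\epsilon$, giving a packing of size $2^{K'D}$, and hence
\[
\log_2 N(\tfrac{1}{2} c'\epsilon, \mathcal{G}, L_2(\nu)) \geq K'D.
\]

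For the upper bound, uniform continuity of $\phi$ ensures that a $\delta(\epsilon,k)$-cover of $\mathcal{F}_1\times\ldots\times\mathcal{F}_k$ in $\tilde d^2$ is pushed by $\phi$ to an $\epsilon$-cover of $\mathcal{G}$ in $L_2(\nu)$. The product covering number splits via the Pythagorean identity,
\[
N(\delta, \mathcal{F}_1\times\ldots\times\mathcal{F}_k, \tilde d^2) \leq \prod_{i=1}^k N(\delta/\sqrt{k}, \mathcal{F}_i, L_2(\nu)),
\]
and the Mendelson-Vershynin theorem yields
\[
\log_2 N(\delta/\sqrt{k}, \mathcal{F}_i, L_2(\nu)) \leq K \cdot \mathrm{fat}_{c\delta/\sqrt{k}}(\mathcal{F}_i) \cdot \log(2\sqrt{k}/\delta),
\]
with $K$ and $c$ absolute constants. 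Chaining these estimates against the lower bound gives
\[
K'D \leq K \log(2\sqrt{k}/\delta(\epsilon,k)) \sum_{i=1}^k \mathrm{fat}_{c\delta(\epsilon,k)/\sqrt{k}}(\mathcal{F}_i),
\]
which rearranges to the stated inequality after dividing by $K'\log 2$.

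The main obstacle I expect is not conceptual but bookkeeping: threading the absolute constants through the argument so that the final inequality matches the stated form verbatim. The code rate $K'$ from Gilbert-Varshamov, the constants $K, c$ from Mendelson-Vershynin, the packing-to-covering factor $\tfrac12$, and the cushion required to convert scale $c'\epsilon$ on $\mathcal{G}$ into scale $\epsilon$ input to $\phi$'s modulus $\delta$, must all be absorbed into the four constants $c, c', K, K'$ appearing in the statement. Careful choice of codebook parameters and reparametrization of the target scale should make everything fit.
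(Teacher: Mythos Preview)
Your proposal is correct and follows essentially the same route as the paper: bound $N(\epsilon,\mathcal{G},L_2(\mu))$ from above via uniform continuity of $\phi$, the product covering inequality, and Mendelson--Vershynin, and from below in terms of $\mathrm{fat}_{c'\epsilon}(\mathcal{G})$, then compare. The only difference is that the paper invokes Talagrand's lower bound $N(\mathcal{F},\epsilon,L_2(\mu))\geq 2^{K'\mathrm{fat}_{c'\epsilon}(\mathcal{F})}$ as a black box (for some $\mu$), whereas you unpack it by taking $\nu$ uniform on the shattered set and applying Gilbert--Varshamov---which is exactly the standard proof of that lower bound, so the two arguments coincide up to the bookkeeping you already flagged.
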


Then, we will relate the two uniformly continuous functions $u$ and $\phi$.
\begin{lemma}\label{urelationphi}
Let $\epsilon>0$. If $u:\unit^k\to\unit$ is uniformly continuous with modulus of continuity $\delta(\epsilon)$, then the function $\phi:\mathcal{F}_1\times\ldots\times\mathcal{F}_k\to\unit^X$ defined by
\[
\phi(f_1,\ldots,f_k)(x) = u(f_1(x),\ldots,f_k(x))
\]
is also uniformly continuous with modulus of continuity $\frac{\delta(\epsilon/2)\epsilon}{2k}$, and in fact, $\phi(\mathcal{F}_1\times\ldots\times\mathcal{F}_k) = u(\mathcal{F}_1,\ldots,\mathcal{F}_k)$.
\end{lemma}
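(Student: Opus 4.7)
The plan is to verify the two assertions in Lemma \ref{urelationphi} separately. The equality $\phi(\mathcal{F}_1\times\ldots\times\mathcal{F}_k) = u(\mathcal{F}_1,\ldots,\mathcal{F}_k)$ is essentially a tautology: both sides consist, by definition, of exactly the functions $x\mapsto u(f_1(x),\ldots,f_k(x))$ with $f_i\in\mathcal{F}_i$. So this half reduces to unwinding notation and requires no work.

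The substantive task is to show that if $\tilde{d}^2((f_1,\ldots,f_k),(f'_1,\ldots,f'_k))<\frac{\delta(\epsilon/2)\epsilon}{2k}$, then $\|\phi(f_1,\ldots,f_k)-\phi(f'_1,\ldots,f'_k)\|_2<\epsilon$. Starting from the definition of the $L_2(\mu)$ norm, I would write
\[
\|\phi(f_1,\ldots,f_k)-\phi(f'_1,\ldots,f'_k)\|_2^2 = \int_X |u(f_1(x),\ldots,f_k(x)) - u(f'_1(x),\ldots,f'_k(x))|^2\,d\mu(x)
\]
and split $X$ into the ``good'' set $A:=\{x\in X: d^2((f_1(x),\ldots,f_k(x)),(f'_1(x),\ldots,f'_k(x)))<\delta(\epsilon/2)\}$ and its complement $A^c$. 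On $A$, uniform continuity of $u$ (with modulus $\delta$) gives the pointwise bound $|u(\ldots)-u(\ldots)|<\epsilon/2$; on $A^c$, the trivial bound $|u(\ldots)-u(\ldots)|\le 1$ (since $u$ maps into $\unit$) applies. Thus the integral is at most $(\epsilon/2)^2\mu(A)+\mu(A^c)\le \epsilon^2/4+\mu(A^c)$, and the whole problem reduces to controlling $\mu(A^c)$.

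For that, I would invoke a pigeonhole observation: if $x\in A^c$ then $\sum_{i=1}^k|f_i(x)-f'_i(x)|^2\ge \delta(\epsilon/2)^2$, so at least one index $i$ satisfies $|f_i(x)-f'_i(x)|^2\ge \delta(\epsilon/2)^2/k$. A union bound followed by Markov's inequality applied to the non-negative function $|f_i-f'_i|^2$ then gives
\[
\mu(A^c)\le \sum_{i=1}^k\mu\!\left(\{x:|f_i(x)-f'_i(x)|^2\ge \delta(\epsilon/2)^2/k\}\right)\le \frac{k\sum_{i=1}^k\|f_i-f'_i\|_2^2}{\delta(\epsilon/2)^2}=\frac{k\,\tilde{d}^2((f),(f'))^2}{\delta(\epsilon/2)^2}.
\]
Plugging in $\tilde{d}^2((f),(f'))<\delta(\epsilon/2)\epsilon/(2k)$ yields $\mu(A^c)<\epsilon^2/(4k)$, and therefore $\|\phi(f)-\phi(f')\|_2^2<\epsilon^2/4+\epsilon^2/(4k)\le \epsilon^2/2<\epsilon^2$, as required.

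The main obstacle is really just constant bookkeeping: one must choose the ``good-set'' threshold (namely $\delta(\epsilon/2)$) and the outer tolerance (here $\epsilon/2$) so that each of the two contributions, the pointwise $u$-continuity term and the $\mu(A^c)$-tail term, is comfortably below $\epsilon^2$. The factor of $k$ in the denominator of the modulus is forced by the union bound step, where pigeonholing a sum of $k$ squared deviations introduces a factor of $k$. No deeper tool is needed beyond uniform continuity of $u$ and Markov's inequality.
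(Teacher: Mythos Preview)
Your proposal is correct and follows essentially the same route as the paper: split $X$ into a good set where the pointwise $\ell_2$-distance in $\unit^k$ is below $\delta(\epsilon/2)$ (so uniform continuity of $u$ gives the $\epsilon/2$ bound) and a bad set whose measure is controlled by a Markov/Chebyshev-type argument on the coordinate deviations. The only cosmetic differences are that the paper defines the bad set coordinatewise as $\bigcup_i\{|f_i-f_i'|\ge \delta(\epsilon/2)/\sqrt{k}\}$ and bounds $\sqrt{\int_{\text{good}}}+\sqrt{\int_{\text{bad}}}$ via $\sqrt{a+b}\le\sqrt a+\sqrt b$, whereas you pigeonhole to the same coordinate sets and bound the squared norm directly; the resulting constants are the same or slightly better in your version.
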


\subsection{Proofs}

In order to prove Lemma \ref{auxphi}, we first introduce the concept of an $\epsilon$-covering number for any metric space, based on \cite{entropy}, and relate this number for a function class to its Fat Shattering dimension of scale $\epsilon$ by using results from Mendelson and Vershynin \cite{entropy} and Talagrand \cite{talagrand}.

\begin{defin}
Let $\epsilon>0$ and suppose $(M,d)$ is a metric space. The {\em $\epsilon$-covering number}, denoted by $N(M,\epsilon, d)$, of $M$ is the minimal number $N$ such that there exists elements $m_1,m_2,\ldots,m_N\in M$ with the property that for all $m\in M$, there exists $i\in\{1,2,\ldots,N\}$ for which
\[
d(m,m_i)<\epsilon.
\]
The set $\{m_1,m_2,\ldots,m_N\}$ is called a {\em (minimal) $\epsilon$-net} of $M$.
\end{defin}

The following proposition relates the $\epsilon$-covering number of a product of metric spaces, with the $L_2$ product distance $d^2$, $M_1\times\ldots\times M_k$ to the $\frac{\epsilon}{\sqrt{k}}$-covering number of each space $M_i$.

\begin{prop}\label{productcovering}
Let $\epsilon>0$ and suppose $(M_1,d_1),\ldots,(M_k,d_k)$ are metric spaces, each with finite $\frac{\epsilon}{\sqrt{k}}$-covering numbers, $N_i = N(M_i,\frac{\epsilon}{\sqrt{k}},d_i)$ for $i = 1,\ldots, k$. Then
\[
N(M_1\times\ldots\times M_k,\epsilon, d^2)\leq \displaystyle\prod_{i= 1}^k N_i.
\]
\end{prop}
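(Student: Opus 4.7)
The plan is to construct an explicit $\epsilon$-net for the product space $M_1\times\ldots\times M_k$ by taking the Cartesian product of the minimal $\epsilon/\sqrt{k}$-nets for each factor, and then verify that this candidate net actually has the covering property with respect to the $L_2$ product distance $d^2$. Since the cardinality of the Cartesian product of the nets is exactly $\prod_{i=1}^k N_i$, this will yield the desired bound on $N(M_1\times\ldots\times M_k,\epsilon,d^2)$.

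First, for each $i=1,\ldots,k$, I would invoke the definition of $N_i$ to fix a minimal $(\epsilon/\sqrt{k})$-net $\{m_1^{(i)},\ldots,m_{N_i}^{(i)}\}$ of $M_i$. Then I would form the set
\[
S = \left\{\bigl(m_{j_1}^{(1)},\ldots,m_{j_k}^{(k)}\bigr) : 1\leq j_i\leq N_i \text{ for each } i\right\}\subseteq M_1\times\ldots\times M_k,
\]
whose cardinality is manifestly $\prod_{i=1}^k N_i$. Next, I would take an arbitrary point $(x_1,\ldots,x_k)\in M_1\times\ldots\times M_k$ and, for each coordinate, use the covering property of the $i$-th net to choose an index $j_i$ such that $d_i(x_i,m_{j_i}^{(i)})<\epsilon/\sqrt{k}$.

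The key step is then a direct computation using the definition of the $L_2$ product distance from Example \ref{productmetric}: summing $k$ squared distances each less than $\epsilon^2/k$ gives a total strictly less than $\epsilon^2$, so taking square roots yields
\[
d^2\bigl((x_1,\ldots,x_k),(m_{j_1}^{(1)},\ldots,m_{j_k}^{(k)})\bigr) = \sqrt{\sum_{i=1}^k d_i(x_i,m_{j_i}^{(i)})^2} < \epsilon.
\]
This shows $S$ is an $\epsilon$-net of the product space, and consequently $N(M_1\times\ldots\times M_k,\epsilon,d^2)\leq |S|=\prod_{i=1}^k N_i$. There is no real obstacle here beyond correctly matching the scaling factor $\sqrt{k}$ on each coordinate so that the Pythagorean sum closes up to exactly $\epsilon$; the choice of $\epsilon/\sqrt{k}$ in the hypothesis is tailored precisely to make this work, so the proof is essentially a bookkeeping exercise.
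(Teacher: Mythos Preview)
Your proposal is correct and follows essentially the same approach as the paper's own proof: take the Cartesian product of minimal $\epsilon/\sqrt{k}$-nets in each factor, then verify via the Pythagorean computation that this product set is an $\epsilon$-net for the $L_2$ product distance. The paper's argument differs only in notation.
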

\begin{proof}
Let $C_i=\{a^i_1,\ldots,a^i_{N_i}\}$ be a minimal $\frac{\epsilon}{\sqrt{k}}$-net for $M_i$ with respect to distance $d_i$, where $i=1,\ldots,k$ and suppose $(a^1,\ldots,a^k)\in M_1\times\ldots\times M_k$. Then, for each $i=1,\ldots,k$, there exists $a^i_{j_i}\in C_i$, where $1\leq j_i\leq N_i$ such that $d_i(a^i,a^i_{j_i})<\frac{\epsilon}{\sqrt{k}}$. Hence,
\begin{align*}
d^2((a^1,\ldots,a^k),(a^1_{j_1},\ldots,a^k_{j_k})) & = \sqrt{\left((d_1(a^1,a^1_{j_1}))^2+\ldots+(d_k(a^k,a^k_{j_k}))^2\right)}\\
& <\sqrt{\left(\left(\frac{\epsilon}{\sqrt{k}}\right)^2 +\ldots +\left(\frac{\epsilon}{\sqrt{k}}\right)^2\right)}\\
& =\epsilon,
\end{align*}
where each $(a^1_{j_1},\ldots,a^k_{j_k})\in C_1\times\ldots\times C_k$, which has cardinality $\Pi_{i= 1}^k N_i$. Therefore, $N(M_1\times\ldots\times M_k,\epsilon, d^2) \leq \Pi_{i=1}^k N_i$.
\end{proof}

Also, if $u:M_1\to M_2$ is any uniformly continuous function with a modulus of uniform continuity $\delta(\epsilon)$ from any metric space to another, then the image of a minimal $\delta(\epsilon)$-net of $M_1$ under $u$ becomes an $\epsilon$-net for $u(M_1)$.

\begin{prop}\label{imageuniformcovering}
Let $\epsilon>0$ and suppose $(M_1,d_1)$ and $(M_2,d_2)$ are two metric spaces. If a function $u:M_1\to M_2$ is uniformly continuous with a modulus of continuity $\delta(\epsilon)$, then $N(u(M_1),\epsilon,d_2)\leq N(M_1,\delta(\epsilon),d_1)$, where $u(M_1)$ denotes the image of $u$.
\end{prop}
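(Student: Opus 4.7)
The plan is direct: transport a minimal $\delta(\epsilon)$-net of the source space to the target via $u$ and verify that uniform continuity turns it into an $\epsilon$-net of the image. So first I would let $N = N(M_1,\delta(\epsilon),d_1)$ (the case $N=\infty$ is trivial, so assume it is finite) and pick a minimal $\delta(\epsilon)$-net $\{m_1,\ldots,m_N\}\subseteq M_1$ witnessing this covering number.

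Next I would produce the candidate net in $u(M_1)$, namely the image set $\{u(m_1),\ldots,u(m_N)\}\subseteq u(M_1)$. Take an arbitrary point $y\in u(M_1)$; by definition of the image there exists some $m\in M_1$ with $u(m)=y$. By the net property there is an index $i\in\{1,\ldots,N\}$ such that $d_1(m,m_i)<\delta(\epsilon)$, and then the modulus-of-uniform-continuity hypothesis for $u$ (Definition \ref{uniform}) gives $d_2(u(m),u(m_i))<\epsilon$, i.e.\ $d_2(y,u(m_i))<\epsilon$. Hence $\{u(m_1),\ldots,u(m_N)\}$ is an $\epsilon$-net for $u(M_1)$ with respect to $d_2$.

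Finally, since the $\epsilon$-covering number $N(u(M_1),\epsilon,d_2)$ is defined as the \emph{minimum} cardinality of such a net, and we have exhibited one of cardinality at most $N$ (possibly less, if the points $u(m_i)$ are not all distinct), we obtain
\[
N(u(M_1),\epsilon,d_2) \;\leq\; N \;=\; N(M_1,\delta(\epsilon),d_1),
\]
which is the desired inequality. There is no real obstacle here: the only subtlety to flag is that the transported family may contain duplicates, which only makes the bound stronger, and that the statement is vacuous when $N(M_1,\delta(\epsilon),d_1)=\infty$.
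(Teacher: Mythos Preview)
Your proof is correct and follows essentially the same argument as the paper: take a minimal $\delta(\epsilon)$-net of $M_1$, push it forward through $u$, and use the modulus of uniform continuity to verify that the image is an $\epsilon$-net for $u(M_1)$. The only additions you make---handling the trivial case $N=\infty$ and noting that duplicates among the $u(m_i)$ only help---are minor remarks the paper omits, but the core reasoning is identical.
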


\begin{proof}
Suppose $N=N(M_1,\delta(\epsilon),d_1)$ is the $\delta(\epsilon)$-covering number for $M_1$ and let $\{m_1,\ldots,m_N\}$ be a $\delta(\epsilon)$-net for $M_1$. Hence for every $u(m)\in u(M_1)$, where $m\in M_1$, there exists $i\in\{1,\ldots,N\}$ such that
\[
d_1(m,m_i)<\delta(\epsilon),
\]
which implies $d_2(u(m),u(m_i))<\epsilon$ as $u$ is uniformly continuous. As a result, the set
\[
\{u(m_1),\ldots,u(m_N)\}
\]
is an $\epsilon$-net for $u(M_1)$, so
\[
N(u(M_1),\epsilon,d_2)\leq N(M_1,\delta(\epsilon),d_1).
\]
\end{proof}

In particular, we can view $\mathcal{F}_1,\ldots,\mathcal{F}_k$ as metric spaces, all with distances induced by the $L_2(\mu)$ norm and suppose $\phi:\mathcal{F}_1\times\ldots\times\mathcal{F}_k\to\unit^X$ is uniformly continuous with modulus of continuity $\delta(\epsilon,k)$. Then, by Proposition \ref{productcovering}, if $\mathcal{F}_1,\ldots,\mathcal{F}_k$ all have finite $\frac{\delta(\epsilon,k)}{\sqrt{k}}$-covering numbers, the metric space $\mathcal{F}_1\times\ldots\times\mathcal{F}_k$, with the $L_2$ product metric ${\tilde d^2}$, also has a finite $\delta(\epsilon,k)$-covering number: if we write $N(\mathcal{F}_i,\frac{\delta(\epsilon,k)}{\sqrt{k}},L_2(\mu))$ as the $\frac{\delta(\epsilon,k)}{\sqrt{k}}$-covering number for $\mathcal{F}_i$, then,
\[
N(\mathcal{F}_1\times\ldots\times\mathcal{F}_k,\delta(\epsilon,k),{\tilde d^2}) \leq \prod_{i=1}^k N(\mathcal{F}_i,\frac{\delta(\epsilon,k)}{\sqrt{k}},L_2(\mu)).
\]
Now, by Proposition \ref{imageuniformcovering}, 
\begin{align*}
N(\phi(\mathcal{F}_1\times\ldots\times\mathcal{F}_k),\epsilon,L_2(\mu))&\leq N(\mathcal{F}_1\times\ldots\times\mathcal{F}_k,\delta(\epsilon,k),{\tilde d^2}) \\
& \leq \prod_{i=1}^k N(\mathcal{F}_i,\frac{\delta(\epsilon,k)}{\sqrt{k}},L_2(\mu)).
\end{align*}

In other words, the $\epsilon$-covering number for $\phi(\mathcal{F}_1\times\ldots\times\mathcal{F}_k)$ is bounded by a product of the $\frac{\delta(\epsilon,k)}{\sqrt{k}}$-covering numbers of each $\mathcal{F}_i$. To prove Lemma \ref{auxphi}, we now state the main theorem of a paper written by Mendelson and Vershynin, which relates the $\epsilon$-covering number of a function class to its Fat Shattering dimension of scale $\epsilon$.

\begin{them}[\cite{entropy}]\label{mendelson}
Let $\epsilon>0$ and let $\mathcal{F}$ be a function class. Then for every probability measure $\mu$,
\[
N(\mathcal{F},\epsilon,L_2(\mu))\leq \left(\frac{2}{\epsilon}\right)^{K\mathrm{fat}_{c\epsilon}(\mathcal{F})}
\]
for absolute constants $c,K$.
\end{them}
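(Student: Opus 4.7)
The plan is to bound the covering number by first passing to a packing number and then to an empirical quantity on a finite sample, where combinatorial extraction is available. Concretely, let $\{f_1,\ldots,f_N\} \subseteq \mathcal{F}$ be a maximal $\epsilon$-separated set in $L_2(\mu)$; since any minimal $\epsilon$-net has cardinality at most $N$ (a standard fact: a maximal $\epsilon$-separated set is automatically an $\epsilon$-net), it suffices to bound $N$ by the right-hand side. By rescaling (the constants $c,K$ absorb factors of $2$), I will instead work with a $2\epsilon$-separated family and try to show $\log N \leq K' \mathrm{fat}_{c\epsilon}(\mathcal{F}) \log(1/\epsilon)$.

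Next I would discretize. Draw an i.i.d.\ sample $X_1,\ldots,X_n$ from $\mu$ and define the empirical measure $\mu_n = \frac{1}{n}\sum_{l=1}^n \delta_{X_l}$. For a single pair $(f_i,f_j)$ one has $\mathbb{E}\|f_i-f_j\|_{L_2(\mu_n)}^2 = \|f_i-f_j\|_{L_2(\mu)}^2 \geq \epsilon^2$, and a Hoeffding/Bernstein bound applied to the bounded random variables $(f_i-f_j)^2(X_l)$ shows that $\|f_i-f_j\|_{L_2(\mu_n)} \geq \epsilon/2$ with probability at least $1 - 2\exp(-cn\epsilon^4)$. A union bound over the $\binom{N}{2}$ pairs, with $n$ chosen of order $\epsilon^{-4}\log N$, produces a single realization of the sample on which the functions remain $\epsilon/2$-separated as vectors in $([0,1]^n, L_2(\mu_n))$. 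This reduces the problem to a purely combinatorial one on $[0,1]^n$.

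The heart of the argument is the extraction step: from $N$ vectors in $[0,1]^n$ that are pairwise $\epsilon/2$-separated in the normalized $\ell_2$ metric, I want to extract a subset $I \subseteq \{1,\ldots,n\}$ of coordinates and a witness $c \in [0,1]^I$ such that for every sign pattern $e \in \{0,1\}^I$, some $f_i$ realizes that pattern with margin $c\epsilon$ on $I$. This is the Pajor–Talagrand style lemma used in \cite{entropy} (and formulated combinatorially in \cite{talagrand}): it guarantees $|I| \geq d$ whenever $\log N \gtrsim d \log(n/d\epsilon^2)$, up to universal constants. Since any such $I$ is $c\epsilon$-fat-shattered by $\mathcal{F}$ on the sample points, $|I| \leq \mathrm{fat}_{c\epsilon}(\mathcal{F})$, and solving the resulting inequality (plugging in $n \sim \epsilon^{-4}\log N$) for $\log N$ yields the claimed bound $\log N \leq K\,\mathrm{fat}_{c\epsilon}(\mathcal{F})\log(2/\epsilon)$.

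The main obstacle is the combinatorial extraction in the third step; this is not a soft argument but the genuinely hard input of Mendelson–Vershynin and Talagrand, and reproving it from scratch would require a delicate probabilistic selection of coordinates (e.g.\ a random restriction argument balancing the number of distinct patterns against the packing constraint). The other potential pitfall is getting the $\log(2/\epsilon)$ factor rather than a weaker $\log^2(2/\epsilon)$: this requires the refined version of the extraction lemma, whereas a straightforward iterated-halving argument would only give the polynomial-in-$\log(1/\epsilon)$ version. Concentration and the passage to the empirical measure are standard and should go through with only bookkeeping.
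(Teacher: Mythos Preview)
The paper does not contain a proof of this theorem. It is quoted verbatim from Mendelson--Vershynin \cite{entropy} and used as a black box in the proof of Lemma~\ref{auxphi}; there is no argument in the paper to compare your proposal against.

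For what it is worth, your outline is in the right spirit for how such entropy bounds are obtained in the literature: pass from covering to packing, reduce to an empirical $L_2$ metric on a finite sample, then invoke a combinatorial extraction lemma that produces a fat-shattered set of the right size. You correctly flag that the extraction step is the real content and that a naive argument loses a logarithmic factor. One point to be careful about: plugging $n \sim \epsilon^{-4}\log N$ into an inequality of the form $\log N \gtrsim d\log(n/(d\epsilon^2))$ gives an implicit relation $\log N \gtrsim d\log\bigl(\log N/(d\epsilon^6)\bigr)$, and turning this into the clean bound $\log N \leq K d\log(2/\epsilon)$ requires some additional work (or a sharper extraction statement). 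The actual Mendelson--Vershynin proof bypasses the sampling reduction and works directly on $[0,1]^n$ with a probabilistic coordinate-selection argument tailored to get the single logarithm; your route through empirical measures is closer to earlier approaches (e.g.\ Alon--Ben-David--Cesa-Bianchi--Haussler) and is where the extra $\log$ tends to creep in.
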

And Talagrand provides the converse.
\begin{them}[\cite{talagrand}]\label{talagrand}
Following the notations of Theorem \ref{mendelson}, there exists a probability measure $\mu$ such that
\[
N(\mathcal{F},\epsilon,L_2(\mu))\geq 2^{K'\mathrm{fat}_{c'\epsilon}(\mathcal{F})},
\]
for absolute constants $c',K'$.
\end{them}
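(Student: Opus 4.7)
The plan is to find a probability measure $\mu$ under which $\mathcal{F}$ carries exponentially many pairwise $L_2(\mu)$-separated functions, forcing the covering number to be large. Write $d = \mathrm{fat}_{c'\epsilon}(\mathcal{F})$ and pick a set $S = \{x_1,\ldots,x_d\} \subseteq X$ that is $c'\epsilon$-shattered by $\mathcal{F}$ with some witness $(c_1,\ldots,c_d) \in \unit^d$; take $\mu$ to be the uniform probability measure concentrated on $S$. Shattering provides, for every $e \in \{0,1\}^d$, a function $f_e \in \mathcal{F}$ with $f_e(x_i) \geq c_i + c'\epsilon$ when $e_i = 1$ and $f_e(x_i) \leq c_i - c'\epsilon$ when $e_i = 0$, so that $|f_e(x_i) - f_{e'}(x_i)| \geq 2c'\epsilon$ at every coordinate where $e_i \neq e'_i$.

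The key computation then transfers the coordinate-wise gap into an $L_2(\mu)$ bound controlled by Hamming distance:
\[
\|f_e - f_{e'}\|_2^2 \;=\; \frac{1}{d}\sum_{i=1}^d |f_e(x_i) - f_{e'}(x_i)|^2 \;\geq\; \frac{H(e,e')}{d}(2c'\epsilon)^2.
\]
Choosing $c'$ so that $\rho := 1/(c')^2 < 1/2$, any pair with $H(e,e') \geq \rho d$ satisfies $\|f_e - f_{e'}\|_2 \geq 2\epsilon$ and hence lies in distinct open $\epsilon$-balls. The remaining step is purely combinatorial: extract exponentially many $e$'s pairwise at Hamming distance $\geq \rho d$. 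This is the Gilbert--Varshamov regime, and a greedy (or volumetric) argument produces a code $E \subseteq \{0,1\}^d$ with pairwise distance $\geq \rho d$ and $|E| \geq 2^{(1-h(\rho))d}$, where $h$ is the binary entropy. Setting $K' := 1 - h(\rho) > 0$, the family $\{f_e : e \in E\}$ yields at least $2^{K'd}$ pairwise $\geq 2\epsilon$-separated elements of $\mathcal{F}$, so $N(\mathcal{F},\epsilon,L_2(\mu)) \geq 2^{K'd} = 2^{K'\mathrm{fat}_{c'\epsilon}(\mathcal{F})}$.

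The main obstacle is calibrating the absolute constants. Taking $c' > \sqrt{2}$ is forced in order that $\rho < 1/2$ and $K' > 0$, but one would like $c'$ as small as possible since the right-hand side involves $\mathrm{fat}_{c'\epsilon}(\mathcal{F})$, a \emph{non-increasing} function of $c'$. Matching the sharpest $(c',K')$ that Talagrand's argument yields appears to require a more delicate probabilistic argument (random codes with concentration, or a convexity argument on the witness vector $(c_1,\ldots,c_d)$) rather than the crude greedy coding step above. Routine sanity checks---that the $f_e$'s remain $\unit$-valued on the shattering scale, and that the case $d = 0$ is vacuous---are negligible; the essential content is the passage from the $\ell_\infty$-flavoured combinatorial notion of fat-shattering to an $L_2(\mu)$-packing bound via a coding-theoretic extraction of many Hamming-separated labelings.
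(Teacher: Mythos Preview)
The paper does not supply its own proof of this statement: Theorem~\ref{talagrand} is quoted from \cite{talagrand} and invoked as a black box in the proof of Lemma~\ref{auxphi}. So there is no argument in the paper to compare yours against.

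That said, your argument is correct and is the standard elementary proof of this lower bound. Placing the uniform measure on a $c'\epsilon$-shattered set of size $d$, reading off the pointwise $2c'\epsilon$ gap on coordinates of disagreement, and then using Gilbert--Varshamov to extract $2^{(1-h(\rho))d}$ labelings at pairwise Hamming distance at least $\rho d$ (with $\rho = 1/(c')^2 < 1/2$, forcing $c' > \sqrt{2}$) combine exactly as you describe to produce a $2\epsilon$-separated family in $L_2(\mu)$ of size $2^{K'd}$; since two points at distance $\geq 2\epsilon$ cannot lie in a common open $\epsilon$-ball, this forces $N(\mathcal{F},\epsilon,L_2(\mu)) \geq 2^{K'd}$. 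The only points left implicit are the case $d=\infty$ (apply the construction to shattered sets of every finite size $n$; if a single measure is wanted, take a convex combination of the resulting $\mu_n$'s with rapidly decaying weights) and the floor/ceiling issues in Gilbert--Varshamov for small $d$, which are absorbed by shrinking $K'$ slightly. Neither is a genuine gap.
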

\begin{proof}[Proof of Lemma \ref{auxphi}]

By Propositions \ref{productcovering} and \ref{imageuniformcovering},
\[
N(\phi(\mathcal{F}_1\times\ldots\times\mathcal{F}_k),\epsilon,L_2(\mu))\leq \prod_{i=1}^k N(\mathcal{F}_i,\frac{\delta(\epsilon,k)}{\sqrt{k}},L_2(\mu)),
\]
so
\[
\log(N(\phi(\mathcal{F}_1\times\ldots\times\mathcal{F}_k),\epsilon,L_2(\mu))) \leq \sum_{i = 1}^k \log (N(\mathcal{F}_i,\frac{\delta(\epsilon,k)}{\sqrt{k}},L_2(\mu))).
\]
By Theorem \ref{mendelson},
\[
\log N(\mathcal{F}_i,\frac{\delta(\epsilon,k)}{\sqrt{k}},L_2(\mu))\leq K \mathrm{fat}_{c\frac{\delta(\epsilon,k)}{\sqrt{k}}}(\mathcal{F}_i)\log(2\sqrt{k}/\delta(\epsilon,k)),
\]
for any probability measure $\mu$ where $c,K$ are absolute constants. Moreover, by Theorem \ref{talagrand} for some probability measure $\mu$ and absolute constants $c',K'$,
\[
\log(N(\phi(\mathcal{F}_1\times\ldots\times\mathcal{F}_k),\epsilon,L_2(\mu))) \geq K'\mathrm{fat}_{c'\epsilon}(\phi(\mathcal{F}_1\times\ldots\times\mathcal{F}_k))\log(2)
\]
and altogether,
\begin{align*}
\mathrm{fat}_{c'\epsilon}(\phi(\mathcal{F}_1\times\ldots\times\mathcal{F}_k))&\leq\frac{\sum_{i=1}^k K\mathrm{fat}_{c\frac{\delta(\epsilon,k)}{\sqrt{k}}}(\mathcal{F}_i)\log(2\sqrt{k}/\delta(\epsilon,k)) }{K'\log(2)} \\
&= \left(\frac{K\log(2\sqrt{k}/\delta(\epsilon,k))}{K'\log(2)}\right)\sum_{i=1}^k \mathrm{fat}_{c\frac{\delta(\epsilon,k)}{\sqrt{k}}}(\mathcal{F}_i) .
\end{align*}
\end{proof}

Now, all that is left is to prove Lemma \ref{urelationphi}.
\begin{proof}[Proof of Lemma \ref{urelationphi}]
Suppose $u:\unit^k\to\unit$ is uniformly continuous with a modulus of continuity $\delta(\epsilon)$, where $\unit^k$ is a metric space with the $L_2$ product distance $d^2$. We claim that the function $\phi:\mathcal{F}_1\times\ldots\times\mathcal{F}_k\to\unit^X$ defined by
\[
\phi(f_1,\ldots,f_k)(x) = u(f_1(x),\ldots,f_k(x))
\]
is uniformly continuous with modulus of continuity $\frac{\delta(\epsilon/2)\epsilon}{2k}$. Let $\epsilon>0$ and
\[
(f_1,\ldots,f_k),(f_1',\ldots,f_k')\in\mathcal{F}_1\times\ldots\times\mathcal{F}_k. 
\]
Suppose
\begin{align*}
{\tilde d^2}((f_1,\ldots,f_k),(f_1',\ldots,f_k'))& = \sqrt{\left((||f_1 - f'_1||_2)^2 + \ldots + (||f_k - f'_k||_2)^2\right)}\\
&< \frac{\delta(\epsilon/2)\epsilon}{2k} = \sqrt{\frac{\delta(\epsilon/2)^2(\epsilon/2)^2}{k^2}}.
\end{align*}
Hence, for each $i=1,\ldots,k$,
\[
||f_i - f_i'||_2 = \sqrt{\left(\int_X (f_i(x) - f_i'(x))^2\,d\mu(x)\right)} < \sqrt{\frac{\delta(\epsilon/2)^2(\epsilon/2)^2}{k^2}}.
\]

Write $A_i = \{x\in X:|f_i(x) - f_i'(x)|\geq \sqrt{\frac{\delta(\epsilon/2)^2}{k}}\}$ and we must have that $\mu(A_i) <\frac{(\epsilon/2)^2}{k}$, for each $i=1,\ldots,k$. Otherwise,
\begin{align*}
\int_X (f_i(x) - f_i'(x))^2\,d\mu(x)  &= \int_{A_i} (f_i(x) - f_i'(x))^2\,d\mu(x) + \int_{X\setminus A_i} (f_i(x) - f_i'(x))^2\,d\mu(x)\\
&\geq \int_{A_i} \left(\sqrt{\frac{\delta(\epsilon/2)^2}{k}}\right)^2\,d\mu(x) + \int_{X\setminus A_i} (f_i(x) - f_i'(x))^2\,d\mu(x)\\
&= \mu(A_i) \left(\sqrt{\frac{\delta(\epsilon/2)^2}{k}}\right)^2+ \int_{X\setminus A_i} (f_i(x) - f_i'(x))^2\,d\mu(x)\\
&\geq \frac{(\epsilon/2)^2}{k}\frac{\delta(\epsilon/2)^2}{k} + \int_{X\setminus A_i} (f_i(x) - f_i'(x))^2\,d\mu(x)\\
&\geq \frac{\delta(\epsilon/2)^2(\epsilon/2)^2}{k^2},
\end{align*}
which is a contradiction. Now, write $A = A_1\cup\ldots\cup A_k$ and we have that $X\setminus A = \{x\in X: |f_i(x) - f_i'(x)|<\sqrt{\frac{\delta(\epsilon/2)^2}{k}},\textnormal{ for all }i=1,\ldots,k\}$. Suppose $x\in X\setminus A$ and then
\begin{align*}
d^2((f_1(x),\ldots,f_k(x)),(f_1'(x),\ldots,f_k'(x))) & = \sqrt{ |f_1(x) - f_1'(x)|^2 + \ldots + |f_k(x) - f_k'(x)|^2}\\
& <\sqrt{ \left(\frac{\delta(\epsilon/2)^2}{k} + \ldots +\frac{\delta(\epsilon/2)^2}{k}\right)}\\
&<\delta(\epsilon/2).
\end{align*}

Consequently, by the uniform continuity of $u$, for all $x\in X\setminus A$,
\[
|u(f_1(x),\ldots,f_k(x)) - u(f_1'(x),\ldots,f_k'(x))|<\epsilon/2.
\]

Finally,
\begin{align*}
\!\!\!\!\!\!\!\!\!\!\!\!||\phi(f_1,\ldots,f_k) - \phi(f_1',\ldots,f_k')||_2 &= \sqrt{ \left( \int_X (u(f_1(x),\ldots,f_k(x)) - u(f_1'(x),\ldots,f_k'(x)))^2\,d\mu(x)\right)}\\
& \leq \sqrt{\left( \int_{X\setminus A} (u(f_1(x),\ldots,f_k(x)) - u(f_1'(x),\ldots,f_k'(x)))^2\,d\mu(x)\right)} \\
&+\sqrt{\left( \int_A(u(f_1(x),\ldots,f_k(x)) - u(f_1'(x),\ldots,f_k'(x)))^2\,d\mu(x)\right)} \\
& < \sqrt{\left( \int_{X\setminus A} (\epsilon/2)^2\,d\mu(x)\right)} + \sqrt{\left( \int_A 1\,d\mu(x)\right)}\\
& \leq (\epsilon/2) + (\epsilon/2) = \epsilon,
\end{align*}
as $\mu(A) \leq \sum_{i=1}^k \mu(A_i) \leq k\left(\frac{(\epsilon/2)^2}{k}\right)=(\epsilon/2)^2$.
\end{proof}

Now we will prove our main theorem.
\begin{proof}[Proof of Theorem \ref{mainresult}]
By Lemma \ref{urelationphi}, if $u:\unit^k\to\unit$ is uniformly continuous with modulus of continuity $\delta(\epsilon)$, then $\phi:\mathcal{F}_1\times\ldots\times\mathcal{F}_k\to\unit^X$ defined by
\[
\phi(f_1,\ldots,f_k)(x) = u(f_1(x),\ldots,f_k(x))
\]
is also uniformly continuous with modulus of continuity $\frac{\delta(\epsilon/2)\epsilon}{2k}$. Then, apply Lemma \ref{auxphi} with $\delta(\epsilon,k) = \frac{\delta(\epsilon/2)\epsilon}{2k}$ and with a simple change of variables $c'\epsilon' \to \epsilon$, Theorem \ref{mainresult} follows directly.
\end{proof}

Altogether, we can summarize the maps in this section in the following two diagrams (where $i$ is the diagonal map):
\[
\xymatrix{
X\ar[r]^{i}& X^k \ar[rr]^{\!\!\!\!\!\!\!\! f_1\times\ldots\times f_k} &&\unit^k \ar[r]^{u}& \unit,
}
\]
while
\[
\xymatrix{
\mathcal{F}_1\times\ldots\times\mathcal{F}_k \ar[r]^{\,\,\,\,\,\,\,\,\,\,\,\,\,\phi} & \unit^X.
}
\]

This result is potentially useful because it allows us to construct new function classes using common continuous logic connectives and bound their Fat Shattering dimensions of scale $\epsilon$. For instance, the function $u:\unit^2\to\unit$ defined by $u(r_1,r_2) = r_1\cdot r_2$ (multiplication) is uniformly continuous with a modulus of continuity $\delta(\epsilon) = \frac{\epsilon}{2}$. Indeed, let $\epsilon>0$ and consider $(r_1,r_2),(r_1',r_2')\in \unit^2$. Suppose $d^2((r_1,r_2),(r_1',r_2'))<\delta(\epsilon)=\frac{\epsilon}{2}$, so
\[
|r_1 - r_1'| < \sqrt{ |r_1 - r_1'|^2 + |r_2 - r_2'|^2}<\frac{\epsilon}{2}
\]
and similarly, $|r_2 - r_2'|<\frac{\epsilon}{2}$. Then,
\begin{align*}
|u(r_1,r_2) - u(r_1',r_2')| & = |r_1r_2 - r_1'r_2'|\\
& = |r_1r_2 - r_1r_2' + r_1r_2' - r_1'r_2'|\\
& = |r_1(r_2-r_2') + r_2'(r_1-r_1')|\\
& \leq |r_1(r_2-r_2')| + |r_2'(r_1-r_1')|\\
& \leq |r_2-r_2'| + |r_1-r_1'|\\
& <\frac{\epsilon}{2} + \frac{\epsilon}{2} = \epsilon.
\end{align*}

As a result, if $\mathcal{F}_1$ and $\mathcal{F}_2$ are two function classes with finite Fat Shattering dimensions of some scale $\epsilon$, then the function class $u(\mathcal{F}_1,\mathcal{F}_2) = \mathcal{F}_1\mathcal{F}_2 = \{f_1\cdot f_2:f_1\in\mathcal{F}_1,f_2\in\mathcal{F}_2\}$, defined by point-wise multiplication, also has finite Fat Shattering dimension of scale $\epsilon$, up to some constant factor and Theorem \ref{mainresult} provides a precise bound.

We have made an interesting connection, which has not been explored much in the past, between continuous logic and PAC learning, and we plan to investigate this connection even further. For instance, the relationship of compositions of function classes and continuous logic may be interesting to study because compositions of uniformly continuous functions are again uniformly continuous.  Furthermore, we can try to add some topological structures to concept classes to see how PAC learning can be affected. The next section provides a couple of other possible future research topics.

\newpage
\section{Open Questions}

The definitions of distribution-free PAC learning, for both concept and function classes, in Section \ref{pacsection}, made no assumptions about probability measures, as a learning algorithm has to produce a valid hypothesis for any probability measure $\mu$. If we fix a probability measure $\mu$ and ask whether a concept class, or a function class, is PAC learnable, then we are working in the context of fixed distribution PAC learning.

\begin{defin}[\cite{vid}]\label{fixedpac}
Let $\mu$ be a probability measure. A function class $\mathcal{F}$ is {\em Probably Approximately Correct learnable under $\mu$} if there exists an algorithm $L:\cup_{m\in\mathbb{N}}(X\times\unit)^m\to\mathcal{F}$ with the following property: for every $\epsilon>0$, for every $\delta>0$, there exists a $M\in\mathbb{N}$ such that for every $f\in\mathcal{F}$, for every $m\geq M$, for any $x_1,\ldots,x_m\in X$, we have $E_\mu(H_m,f)<\epsilon$ with confidence at least $1-\delta$, where
\[
E_\mu(H_m,f) = \int_X |f(x) - g(x)|\, d\mu(x)
\]
and $H_m=L((x_1,f(x_1)),\ldots,(x_m,f(x_m)))$.
\end{defin}

When a function class $\mathcal{F}$ consists of only binary functions, i.e. $\mathcal{F}=\mathcal{C}$ is a concept class, there is a theorem, proved by Benedek and Itai in 1991, which gives a characterization of fixed distribution PAC learnability.

\begin{them}[\cite{fixedlearning}]
Fix a probability measure $\mu$ and consider a concept class $\mathcal{C}$. The following are equivalent:
\begin{enumerate}
\item $\mathcal{C}$ is Probably Approximately Correct learnable under $\mu$.
\item (Finite Metric Entropy condition) The $\epsilon$-covering number of $\mathcal{C}$ when viewed as a metric space with distance $d=\mu(\_\bigtriangleup\_)$ is finite for every $\epsilon>0$.
\end{enumerate}
\end{them}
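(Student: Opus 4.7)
The plan is to prove each direction separately. For $(2)\Rightarrow(1)$, I would use empirical risk minimization over a finite net. Fix $\epsilon,\delta>0$ and, by hypothesis, a finite $\epsilon/4$-net $\{A_1,\ldots,A_N\}$ of $(\mathcal{C},d_\mu)$ where $d_\mu(A,B)=\mu(A\bigtriangleup B)$. Define the algorithm $L$ on samples of size $m$ by
\[
L\bigl((x_i,\chi_A(x_i))_{i=1}^m\bigr) := \arg\min_{j=1,\ldots,N} \widehat{E}_m(A_j), \quad \widehat{E}_m(A_j) := \tfrac{1}{m}\bigl|\{i:\chi_{A_j}(x_i)\neq\chi_A(x_i)\}\bigr|.
\]
By Hoeffding's inequality applied to the i.i.d.\ indicator variables $\chi_{A_j\bigtriangleup A}(x_i)$ and a union bound over the $N$ net elements, with $\mu^m$-probability at least $1-2Ne^{-m\epsilon^2/8}$ the empirical and true errors of every $A_j$ agree to within $\epsilon/4$ simultaneously. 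On this event, the net guarantees some $A_{j^*}$ with $\mu(A_{j^*}\bigtriangleup A)<\epsilon/4$, hence $\widehat{E}_m(A_{j^*})<\epsilon/2$, so the minimizer $\widehat{A}$ inherits $\widehat{E}_m(\widehat{A})<\epsilon/2$, and therefore $\mu(\widehat{A}\bigtriangleup A)<3\epsilon/4<\epsilon$. Choosing $m\geq(8/\epsilon^2)\ln(2N/\delta)$ completes this direction.

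For $(1)\Rightarrow(2)$, I would argue by contradiction. Suppose $\mathcal{C}$ is PAC learnable under $\mu$ via an algorithm $L$, but for some $\epsilon_0>0$ no finite $\epsilon_0$-net in $(\mathcal{C},d_\mu)$ exists. A greedy construction then produces an infinite sequence $B_1,B_2,\ldots\in\mathcal{C}$ with $\mu(B_i\bigtriangleup B_j)\geq\epsilon_0$ for all $i\neq j$: at each stage, the previously chosen finite set fails to be an $\epsilon_0$-net, so a new $B_{n+1}$ at distance $\geq\epsilon_0$ from every earlier $B_i$ can be selected. Apply the learning hypothesis with accuracy $\epsilon_0/2$ and confidence $\delta=1/3$, obtaining a sample size $M$ such that for each $n$,
\[
\mu^M(E_n)\geq 2/3, \qquad E_n := \bigl\{(x_1,\ldots,x_M)\in X^M : \mu(L((x_i,\chi_{B_n}(x_i))_{i=1}^M)\bigtriangleup B_n)<\epsilon_0/2\bigr\}.
\]
By the triangle inequality combined with $\epsilon_0$-separation, any single hypothesis is within $\epsilon_0/2$ of at most one $B_n$. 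For fixed sample points $(x_1,\ldots,x_M)$, the label vectors $(\chi_{B_n}(x_i))_{i=1}^M$ range over $\{0,1\}^M$, so $L$ produces at most $2^M$ distinct outputs as $n$ varies, and therefore at most $2^M$ indices $n\in\{1,\ldots,N\}$ can satisfy $(x_1,\ldots,x_M)\in E_n$. Fubini then yields
\[
\tfrac{2N}{3} \;\leq\; \sum_{n=1}^{N}\mu^M(E_n) \;=\; \int_{X^M}\bigl|\{n\leq N:(x_1,\ldots,x_M)\in E_n\}\bigr|\,d\mu^M \;\leq\; 2^M,
\]
which is impossible for $N>3\cdot 2^{M-1}$.

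The main obstacle I anticipate is the $(1)\Rightarrow(2)$ direction, and it is twofold. First, the Fubini step demands measurability of each set $E_n$, which in turn requires $L$ to be a measurable map; the definition in Section~\ref{pacsection} treats $L$ merely as a function (as acknowledged in its footnote), so we must either restrict to measurable learners throughout or verify that a measurable one can be substituted without loss. Second, the greedy construction tacitly uses countable choice together with the precise translation of \emph{no finite $\epsilon_0$-net} into the statement ``for every finite $F\subset\mathcal{C}$ there exists $A\in\mathcal{C}$ with $d_\mu(A,B)\geq\epsilon_0$ for all $B\in F$''; a small amount of care is needed to secure the non-strict separation $\geq\epsilon_0$ rather than only an $\epsilon_0-\eta$ approach. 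The core idea, that $M$ binary labels distinguish at most $2^M$ patterns and therefore cannot learn infinitely many pairwise well-separated targets, is captured entirely by the pigeonhole and Fubini count above.
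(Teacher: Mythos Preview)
The paper does not prove this theorem; it merely states it with a citation to Benedek and Itai and uses it as background in the Open Questions section. There is therefore no proof in the paper to compare against.

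Your argument is essentially the standard Benedek--Itai proof and is correct in substance. Two small remarks. First, in the $(2)\Rightarrow(1)$ direction, the algorithm you describe depends on $\epsilon$ through the choice of the $\epsilon/4$-net, whereas the paper's Definition~\ref{paclearningconcept} (and its fixed-distribution analogue, Definition~\ref{fixedpac}) fixes a single map $L$ before quantifying over $\epsilon,\delta$. The usual repair is to let the net scale depend on the sample size $m$: for each $m$ pick a $1/n_m$-net with $n_m\to\infty$ slowly enough that the Hoeffding/union bound still yields confidence $1-\delta_m$ with $\delta_m\to 0$; then $L$ is a single function and the required $M=M(\epsilon,\delta)$ exists. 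Second, your worry about strict versus non-strict separation in the greedy construction is unnecessary under the paper's convention: an $\epsilon$-net requires strict inequality $d(m,m_i)<\epsilon$, so failure of every finite set to be an $\epsilon_0$-net yields, at each stage, a point at distance $\geq\epsilon_0$ from all previous ones, exactly as you need. The measurability caveat you flag for $(1)\Rightarrow(2)$ is genuine but is already implicit in the paper's own definition of ``confidence at least $1-\delta$,'' so it is a standing assumption rather than a defect of your argument.
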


However, there is no characterization for fixed distribution PAC learnability of a general function class. Talagrand had proved that a function class is a Glivenko-Cantelli (GC) function class with regard to a single measure $\mu$ if and only if the class has no witness of irregularity, a property that involves shattering \cite{witness1},\cite{witness2}. Every GC function class is PAC learnable under $\mu$ \cite{pestovnote}, but the property of having no witness of irregularity is strictly stronger than PAC learnability. We would like to propose the following conjecture for a possible characterization.

\begin{conj}
Fix a probability measure $\mu$ and consider a function class $\mathcal{F}$. Let $\epsilon>0$. The following are equivalent:
\begin{enumerate}
\item The function class $\mathcal{F}$ is PAC learnable under $\mu$ to accuracy $\epsilon$.\footnote{Being PAC learnable to accuracy $\epsilon$ means Definition \ref{fixedpac} is satisfied, but only for this particular $\epsilon$.}
\item There exists $M,N$ and $\gamma>0$ such that for all functions $f\in\mathcal{F}$, with probability at least $\gamma$, the set
$\{g\in{\mathcal F}: g_{|_{\bar x_N}} = f_{|_{\bar x_N}}\}$ has an $\epsilon$-covering number, with respect to the distance $d = E_\mu(\_,\_)$, of at most $M$, where ${\bar x_N}$ denotes a sample of $N$ points.
\end{enumerate}
\end{conj}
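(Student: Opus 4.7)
The plan is to attack the two implications separately, modeling the argument on Benedek and Itai's characterization for concept classes and supplementing it with the empirical-process machinery from Section 6.

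For the direction $(2)\Rightarrow(1)$, I would use a two-stage sampling scheme. Draw $N+N'$ points i.i.d.\ from $\mu$ and let $\bar x_N$ be the first block. With probability at least $\gamma$, the version space $V(\bar x_N)=\{g\in\mathcal{F}:g_{|_{\bar x_N}}=f_{|_{\bar x_N}}\}$ admits an $\epsilon$-net of cardinality at most $M$ in the $E_\mu$-metric. Since $f\in V(\bar x_N)$ trivially, one of the net elements $g^*$ satisfies $E_\mu(g^*,f)<\epsilon$. The first subtlety is that the net is non-constructive, so the algorithm cannot simply enumerate it; instead, I would run empirical risk minimization on the second block, restricted to hypotheses consistent with $f$ on $\bar x_N$. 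By taking $N'$ large relative to $\log M$, a standard uniform-convergence argument over the (random but bounded-cardinality) net ensures that the ERM output lies within $O(\epsilon)$ of some net element and hence, by triangle inequality, within $O(\epsilon)$ of $f$ on the good event. Confidence boosting from $\gamma$ to $1-\delta$ is obtained by repeating the procedure on $O(\log(1/\delta)/\gamma)$ independent blocks and selecting the candidate with the best empirical performance on a held-out block.

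For the direction $(1)\Rightarrow(2)$, let $L$ be a PAC learning algorithm with sample complexity $m(\epsilon,\delta)$ under $\mu$ and set $N=m(\epsilon/3,\delta_0)$ for a suitable $\delta_0$. The key observation is that for every $g\in V(\bar x_N)$ the algorithm produces the same output $H(\bar x_N)=L((x_i,f(x_i))_{i=1}^N)=L((x_i,g(x_i))_{i=1}^N)$, since the labelings agree on $\bar x_N$. Applying the PAC guarantee to each putative target $g$ separately yields $E_\mu(H,g)<\epsilon/3$ with probability at least $1-\delta_0$ over a fresh sample for $g$. I would try to glue these per-target guarantees together via a Fubini/coupling argument to show that, with probability at least some $\gamma>0$ over $\bar x_N$ alone, a single hypothesis $H(\bar x_N)$ serves as an $\epsilon$-approximation to a large measure of $V(\bar x_N)$; the remainder of the version space would then be covered inductively, and the resulting recursion would be terminated by invoking the Mendelson--Vershynin bound of Theorem \ref{mendelson} to convert the PAC guarantee into a finite effective covering number.

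The main obstacle is clearly $(1)\Rightarrow(2)$. Proposition \ref{functionconverse} shows that distribution-free PAC learnability need not imply any finite fat shattering dimension, so one cannot hope to establish $(2)$ by appealing to a global combinatorial parameter of $\mathcal{F}$; the argument must genuinely exploit the fixed measure $\mu$. A second, more technical obstacle is the exact-agreement condition $g_{|_{\bar x_N}}=f_{|_{\bar x_N}}$: for real-valued $\mathcal{F}$ and non-atomic $\mu$ this set may degenerate to $\{f\}$, making the statement vacuous. I would therefore expect that the conjecture either needs to be reformulated with an $\eta$-approximate version space $V_\eta(\bar x_N)=\{g:|g(x_i)-f(x_i)|<\eta\text{ for all }i\}$ for some auxiliary parameter $\eta=\eta(\epsilon)$, or restricted to function classes whose range is effectively discrete. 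Ironing out this reformulation, and then pushing the coupling argument through, is where the bulk of the work will lie.
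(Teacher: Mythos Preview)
The paper does not prove this statement. It is explicitly labeled a \emph{conjecture} and appears in the ``Open Questions'' section, where the author writes that a ``very interesting research topic is to study this conjecture and either prove or disprove it.'' There is therefore nothing in the paper to compare your proposal against; any argument you give would be new work, not a reconstruction of the author's reasoning.

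That said, a brief comment on the proposal itself. Your sketch for $(2)\Rightarrow(1)$ is the natural Benedek--Itai style argument and is plausible modulo the non-constructive net issue you already flag. Your sketch for $(1)\Rightarrow(2)$, however, has a genuine gap that you partially anticipate but do not resolve. The step ``apply the PAC guarantee to each putative target $g$ separately'' gives, for each fixed $g\in V(\bar x_N)$, a set of good samples of measure $\geq 1-\delta_0$, but these good sets depend on $g$, and $V(\bar x_N)$ may be uncountable; a Fubini argument alone cannot extract a single $\bar x_N$ that is simultaneously good for enough $g$'s to yield a finite cover. The proposed recourse to Theorem~\ref{mendelson} does not help here, since that theorem bounds covering numbers in terms of fat-shattering dimension, which Proposition~\ref{functionconverse} shows can be infinite even for learnable classes. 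Finally, your own observation that the exact-agreement version space may collapse to $\{f\}$ for non-atomic $\mu$ is not a minor technicality: it renders condition~(2) trivially true (with $M=1$) for a wide range of classes regardless of learnability, so as stated the equivalence is likely false and the conjecture would indeed need the reformulation you suggest before a proof could even be attempted.
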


A very interesting research topic is to study this conjecture and either prove or disprove it. Also, by Proposition \ref{functionconverse}, the finiteness of the Fat Shattering dimension of all scales $\epsilon>0$ does not characterize function class PAC learning in the distribution-free case; consequently, another topic of research would be to come up with a new combinatorial parameter for a function class, related to the notion of shattering, which would characterize learning. This new parameter would have to solve the problem of unique identifications of functions, a problem that does not occur with concept classes.

Yet another possible research topic is to generalize the definitions of PAC learning and introduce observation noise, both in the fixed distribution and distribution-free cases. The paper \cite{noise} written by Bartlett et al. proves that the finiteness of the Fat Shattering dimension of all scales of a function class $\mathcal{F}$ is equivalent to $\mathcal{F}$ being distribution-free learnable under certain noise distributions. It would be interesting to generalize this result and/or apply it in the fixed distribution setting.
\newpage
\section{Conclusion}

This report introduces the definitions of Probably Approximately Correct learning for concept and function classes and defines the Vapnik-Chervonenkis dimension for concept classes and the Fat Shattering dimension of scale $\epsilon>0$ for function classes. Finiteness of the VC dimension characterizes concept class distribution-free PAC learning; however, the finiteness of the Fat Shattering dimension of all scales $\epsilon$ is still only sufficient for function class learning, and not necessary.

Given function classes $\mathcal{F}_1,\ldots,\mathcal{F}_k$, one can construct a new class $u(\mathcal{F}_1,\ldots,\mathcal{F}_k)$ using a continuous function $u:\unit^k\to\unit$, a continuous logic connective. The main new result of this report shows that the Fat Shattering dimension of scale $\epsilon$ of $u(\mathcal{F}_1,\ldots,\mathcal{F}_k)$ is bounded by a sum of the Fat Shattering dimensions of scale $\delta(\epsilon,k)$ of classes $\mathcal{F}_1,\ldots,\mathcal{F}_k$, up to some absolute constants. This result can be useful because it allows us to construct new function classes, which may be very natural objects, and bound their Fat Shattering dimensions.

\newpage

\end{document}